%% file: arxiv.tex
\documentclass{article}

\usepackage{microtype}
\usepackage{graphicx}
\usepackage{subcaption}
\usepackage{booktabs} 

\usepackage{hyperref}

\usepackage[preprint]{icml2026}

\usepackage{amsmath}
\usepackage{amssymb}
\usepackage{mathtools}
\usepackage{amsthm}
\usepackage{thm-restate}

\usepackage[capitalize,noabbrev]{cleveref}

\theoremstyle{plain}
\newtheorem{theorem}{Theorem}[section]

\newtheorem{lemma}[theorem]{Lemma}
\newtheorem{corollary}[theorem]{Corollary}
\theoremstyle{definition}
\newtheorem{definition}[theorem]{Definition}
\newtheorem{assumption}[theorem]{Assumption}
\theoremstyle{remark}
\newtheorem{remark}[theorem]{Remark}

\usepackage[textsize=tiny]{todonotes}

\input{macros}

\icmltitlerunning{Active PU learning}

\begin{document}

\twocolumn[
  \icmltitle{Active learning from positive and unlabeled examples}



  \icmlsetsymbol{equal}{*}

  \begin{icmlauthorlist}
    \icmlauthor{Farnam Mansouri}{waterloo,vector}
    \icmlauthor{Sandra Zilles}{regina,amii}
    \icmlauthor{Shai Ben-David}{waterloo,vector}
  \end{icmlauthorlist}

  \icmlaffiliation{waterloo}{David R. Cheriton School of Computer Science, University of Waterloo, Waterloo, ON, Canada}
  \icmlaffiliation{vector}{Vector Institute, Toronto, ON, Canada}
  \icmlaffiliation{regina}{Department of Computer Science, University of Regina, Regina, SK, Canada}
  \icmlaffiliation{amii}{Alberta Machine Intelligence Institute (AMII), Edmonton, AB, Canada}

  \icmlcorrespondingauthor{Farnam Mansouri}{f5mansou@uwaterloo.ca}

  \icmlkeywords{Machine Learning, ICML}

  \vskip 0.3in
]



\printAffiliationsAndNotice{}  

\begin{abstract}
Learning from positive and unlabeled data (PU learning) is a weakly supervised variant of binary classification in which the learner receives labels only for (some) positively labeled instances, while all other examples remain unlabeled. Motivated by applications such as advertising and anomaly detection, we study an \emph{active} PU learning setting where the learner can adaptively query instances from an unlabeled pool, but a queried label is revealed only when the instance is positive and an independent coin flip succeeds; otherwise the learner receives no information. In this paper, we provide the first theoretical analysis of the label complexity of active PU learning.

\end{abstract}

\section{Introduction}

Learning from positive and unlabeled data (PU learning) is a weakly supervised variant of binary classification in which the training data only consists of positively labeled and unlabeled examples. PU learning appears naturally in many real-world problems, including personal advertisement,
land cover classification \cite{li2010positive}, 
prediction of protein similarity \cite{elkan2008learning}, as well as applications such as knowledge base completion \cite{bekker2020learning} and disease-gene identification \cite{yang2012positive}.


In active learning, a learning algorithm is given access to a large pool of unlabeled examples, and is allowed to request the label of any particular example from that pool. This is in contrast with standard passive learning, where the data are assumed to be collected randomly and labeled independently of the learner's choices. The objective of active learning is to learn an accurate classifier while requesting as few labels as possible, thus in particular reducing the workload of human annotators by carefully selecting the examples from the unlabeled pool that should be labeled.

Just as in standard machine learning, the labeling process can also be difficult for many PU learning applications. For example, in recommender systems, labels are often collected based on whether a recommendation is relevant (a positive outcome), while non-interactions are typically ambiguous and treated as unlabeled. Similarly, in anomaly detection, since anomalies are rare, the learner may only have access to data labeled as normal (which we view as the positive class). In both settings, obtaining labels can be costly, and may potentially benefit from more carefully selecting which examples from the unlabeled pool to query \cite{perini2020class, vercruyssen2018semi}.

We therefore study \emph{active PU learning}, in which the learner may adaptively query instances, but the label request is given to an expert that is only able to detect positive labels for a portion of instances. We make the simplifying assumption that, for each queried positive instance, the expert detects its label independently with probability~$\omega$. This assumption is equivalent to the \emph{selected-completely-at-random} (SCAR) assumption, which is often employed in passive PU learning \cite{liu2002partially, blanchard2010semi, du2015convex, bekker2018estimating, mansouri2025learning}.

Since the objective of active learning is to identify an accurate classifier with as small a number of queries as possible, a standard measure of evaluation in active learning is the so-called \emph{label complexity}, i.e., the number of label requests that are necessary and sufficient to learn an accurate classifier. In this paper, we provide the first label complexity analysis for active PU learning, using a disagreement-based analysis. Our bounds depend critically on the \emph{disagreement coefficient} $\theta$, a standard quantity in the theory of active learning \cite{dasgupta2007agnostic, hanneke2007bound, hanneke2009theoretical, hanneke2011rates, hanneke2014theory}. They also rely on a lower bound on the positive class prior $\pi_{\D}$, without requiring the learner to know $\pi_{\D}$ in advance. Ignoring the dependence on $\omega$ and $\pi_{\D}$, our bounds incur only a multiplicative $\theta$ gap compared to established label complexity bounds for the classical CAL algorithm in standard active learning \cite{cohn1994improving, hanneke2009theoretical}.

Our analysis is based on the assumption that the underlying data distribution is continuous. This is a natural assumption, as a substantial line of work in active learning studies settings where the instance distribution admits a density and the decision boundary (or regression function) satisfies smoothness or regularity conditions; under such assumptions, one can obtain refined label complexity guarantees via geometric and disagreement-based arguments \cite{castro2007minimax, wang2011smoothness, locatelli2018adaptive, kpotufe2022margin}.


\subsection{Related Work} \label{sec:intro-relared}

While active learning has been extensively studied in the fully supervised setting, it has received only limited attention in the context of PU learning. 

Notably, active PU learning (also referred to as active one-class classification) has been studied using a variety of uncertainty sampling approaches, including Query-by-Committee, least confidence, margin-based, and entropy-based sampling methods \cite{abe2006outlier,gornitz2009active,6137386,schlachter2018active}. Prior work has also considered strategies that prioritize queries that are most likely to be positive \cite{he2006generalized,ghasemi2011active}. More recently, \citet{perini2020class} study active methods for empirical estimation of the class prior (i.e., the prevalence of positive examples) for PU learning.

However, with the exception of \citep{perini2020class}, these works are purely empirical, and even \citet{perini2020class} do not provide finite label complexity guarantees. In this context, our work is the first study establishing formal guarantees on the label complexity of active PU learning.


\section{Preliminaries} \label{sec:prelim}
\subsection{Active Learning}

Let $\X$ be a \emph{domain set}. The \emph{hypothesis class} $\cH$ is a set of functions $h: \X \rightarrow \{0, 1\}$. By $\D$, we denote a distribution over $\X$ called the data generating distribution. We use $\ell: \X \to \{0, 1\}$ to denote an underlying labeling rule.
In the study of active learning, there is a pool of data $\{x_1, x_2, \ldots \}$ sampled i.i.d.\ from $\D$. An active learner, at each time $t$, requests the label of an instance $x \in \X$ from the pool and receives $\ell(x)$. The goal of the learner is to output a function $f: \X \to \{0, 1\}$ that minimizes
$$
\err_{\D}(f, \ell) = \mathbb{E}_{x \sim \D} [f(x) \neq \ell(x)].
$$


For any multiset $S \subseteq \X$ define
$$
\hat \err_S(f, \ell) := \frac{1}{|S|} \sum_{x \in S} \mathbbm{1} \{ f(x) \neq \ell(x) \}\,.
$$
Also, define $\hat \Pr_S$ as the empirical distribution induced by $S$, i.e., for any event $A\subseteq\X$, $\Pr_S [A]$ is defined as the fraction of elements in the multiset $S$ that also belong to $A$. Formally,
$$\hat \Pr_S [A] := \frac{|\{x\in S: x\in A\}|}{|S|}\,,
$$
where both numerator and denominator account for elements in $S$ with their multiplicity in $S$.

\begin{definition}
    Define the pseudo metrics $\rho_\D, \rho_S: 2^\X \times 2^\X \rightarrow \Re^{\geq 0}$ over $2^{\X}$ by
    $$
    \begin{aligned}
        \forall f, g \in 2^\X, \; \rho_\D(f, g) & :=  \Pr_{x \sim \D}\left[f(x) \neq g(x) \right] \,,\\
        \forall f, g \in 2^\X, \; \rho_S(f, g) & := \frac{|\{x: f(x) \neq g(x)\}|}{{|S|}}\,.
    \end{aligned}
    $$
\end{definition}

\begin{definition} [Region of Disagreement \cite{hanneke2007bound}]
    The region of disagreement of any set $V \subseteq \cH$ is defined as
$$\DIS(V)=\left\{x \in \X: \exists h_1, h_2 \in V \text { s.t. } h_1(x) \neq h_2(x)\right\}\,.$$
\end{definition}

\begin{definition} [Disagreement Rate \cite{hanneke2007bound}]
    The \emph{disagreement rate}, and the \emph{empirical disagreement rate}, respectively, of any set $V \subseteq \cH$ are defined as
    $$
    \begin{aligned}
        \Delta_\D (V) & := \Pr_{x \in \D}[x \in \DIS(V)]\,, \\
    \hat \Delta_S(V) & := \hat \Pr_S[\DIS(V)]\,.
    \end{aligned}
    $$
\end{definition}

\begin{definition}
    Let $f:\X \rightarrow \{0, 1\}$ and $r > 0$.
    The \emph{$r$-ball around $f$}
    with respect to $(\D, \cH)$ is defined
    as
  $$\mathcal B_{\D}(f, r)=\left\{h \in \cH: \rho_\D(f, h) \leq r\right\}\,.$$ 
    For a multiset $S \subseteq \X$, define the \emph{empirical $r$-ball} centered around $f$ with respect to $(S, \cH)$ as
    $$
  \hat{\mathcal B}_S(f, r)=\left\{h \in \cH: \rho_S(f, h) \leq r\right\}\,.
  $$
\end{definition}

\begin{definition} [Disagreement Coefficient \cite{hanneke2007bound}]
    The disagreement coefficient 
    is defined as 
    $$\theta =\sup _{h \in \cH, r > 0} \frac{\Delta_\D(\mathcal{B}_{\D} (h, r))}{r}\,.$$
\end{definition}

\subsection{Passive PU Learning}
The study of learning from positive and unlabeled examples (PU learning) considers the setting where the learner has access only to positive examples and unlabeled data. In the following, we state an existing result from passive PU learning due to \citet{liu2002partially}, which we will use later.

\begin{theorem}
    [Theorem 1 of \cite{liu2002partially}]\label{thm:real-passive-PU}
    Let $\mathcal{H}$ be a hypothesis class of VC dimension $d$ over the domain $\X$. Then there exists a constant $M_1> 1$ such that for any $\varepsilon,\delta>0$ and any distribution $\D$ over $\X\times\{0,1\}$ that is realized by $\mathcal{H}$, if
\[
k \;\ge\; M_1 \frac{d\ln(1/\varepsilon)+\ln(1/\delta)}{\varepsilon},
\]
then the following holds. Let $S^U$ be an unlabeled sample of size $k$ drawn i.i.d.\ from $\D$, and let $S^P$ be a positive sample of size $k$ drawn i.i.d.\ from $\D(\cdot \mid y=1)$. Define
\[
\begin{aligned}
    & h^{PU}\;:=\;\arg\min_{h\in\mathcal{H}(S^P)} \bigl|h\cap S^U\bigr| \\
& \text{s.t.} \quad \mathcal{H}(S^P)\;:=\;\{h\in\mathcal{H}:\ \forall x\in S^P,\ h(x)=1\},
\end{aligned}
\qquad
\]
Then with probability at least $1-\delta$, we have $\err_{\D}(h^{PU},\ell)\le \varepsilon$.

\end{theorem}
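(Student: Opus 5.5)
The plan is a two-sided error decomposition of $h^{PU}$. Since the labeling $\ell$ is realized by some $h^*\in\mathcal{H}$, we have $h^*(x)=1$ for every $x\in S^P$, because every element of $S^P$ is positive. Thus $h^*\in\mathcal{H}(S^P)$ and the constraint set is nonempty. Write $\pi:=\Pr_{\D}[\ell(x)=1]$ and split
\[
\err_{\D}(h,\ell)=\Pr_{\D}[h=0,\ell=1]+\Pr_{\D}[h=1,\ell=0]=:\mathrm{FN}(h)+\mathrm{FP}(h).
\]
The key identity is
\[
\Pr_{\D}[h=1]=\pi-\mathrm{FN}(h)+\mathrm{FP}(h),
\]
so that $\mathrm{FP}(h)=\Pr_{\D}[h=1]-\pi+\mathrm{FN}(h)$. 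Minimizing $|h\cap S^U|$ is empirical minimization of $\Pr_{\D}[h=1]$. Among hypotheses with small false-negative mass, this forces small false-positive mass.

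\textbf{Controlling false negatives.} Apply the standard realizable $\varepsilon$-net / consistency bound to the distribution $\D(\cdot\mid y=1)$ and the class $\mathcal{H}$. With $k\ge M_1\frac{d\ln(1/\varepsilon)+\ln(1/\delta)}{\varepsilon}$, with probability at least $1-\delta/2$ every $h\in\mathcal{H}$ that labels all of $S^P$ positive satisfies $\Pr_{\D(\cdot\mid y=1)}[h=0]\le\varepsilon/4$. Hence $\mathrm{FN}(h)\le\pi\varepsilon/4\le\varepsilon/4$ for all $h\in\mathcal{H}(S^P)$.

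\textbf{Controlling $\Pr_{\D}[h=1]$.} Use a relative (multiplicative) uniform deviation bound over the VC class $\mathcal{H}$ on the unlabeled sample, for example the Vapnik--Chervonenkis relative deviation inequality. With probability at least $1-\delta/2$, for all $h\in\mathcal{H}$,
\[
\Pr_{\D}[h=1]\le 2\hat\Pr_{S^U}[h=1]+\varepsilon/8
\qquad\text{and}\qquad
\hat\Pr_{S^U}[h=1]\le 2\Pr_{\D}[h=1]+\varepsilon/8,
\]
or a sharper one-sided form, again with sample size $O\bigl((d\ln(1/\varepsilon)+\ln(1/\delta))/\varepsilon\bigr)$.

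\textbf{Main obstacle.} A pure multiplicative factor 2 is useless here, because $\Pr[h=1]$ is of order $\pi$ and may be a constant. What is needed is
\[
\Pr_{\D}[h^{PU}=1]\le\Pr_{\D}[h^*=1]+\varepsilon/2=\pi+\varepsilon/2,
\]
which is an additive $\varepsilon$ bound and naively requires $1/\varepsilon^2$ samples. The fix is to apply the relative deviation bound to the \emph{difference} sets $h\,\triangle\,h^*$ rather than to $\{h=1\}$. These sets form a class of VC dimension $O(d)$. Write
\[
\Pr[h=1]-\Pr[h^*=1]=\Pr[h=1,h^*=0]-\Pr[h=0,h^*=1]=\mathrm{FP}(h)-\mathrm{FN}(h),
\]
and the same identity holds empirically on $S^U$. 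Optimality of $h^{PU}$ gives
\[
\hat\Pr_{S^U}[h^{PU}=1,\ell=0]\le\hat\Pr_{S^U}[h^{PU}=0,\ell=1].
\]
The right-hand side is controlled by the relative bound applied to the set $\{h=0,\ell=1\}$, whose true mass is at most $\varepsilon/4$. Its empirical mass is therefore at most $\varepsilon/2+$ a lower-order term. Next apply the relative deviation bound (empirical $\le\varepsilon$ implies true $\le O(\varepsilon)$) to the sets $\{h=1,\ell=0\}=\{h=1\}\cap\{h^*=0\}$. This yields $\mathrm{FP}(h^{PU})\le\varepsilon/2$. Adjusting constants and absorbing them into $M_1$, a union bound gives $\err_{\D}(h^{PU},\ell)\le\varepsilon$ with probability at least $1-\delta$.
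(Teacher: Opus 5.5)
The paper gives no proof of this statement. It imports the result from \citet{liu2002partially} and uses it as a black box, so your argument has to stand on its own. It does, and it is the natural proof:
\begin{itemize}
\item the $\varepsilon$-net bound on $S^P$ controls $\mathrm{FN}(h)$ uniformly over $\mathcal{H}(S^P)$;
\item feasibility of $\ell$ turns optimality of $h^{PU}$ into $\hat\Pr_{S^U}[h^{PU}=1,\ell=0]\le\hat\Pr_{S^U}[h^{PU}=0,\ell=1]$, where the common part $\{h=1,\ell=1\}$ cancels exactly because both counts are taken on the same sample;
\item factor-$2$ relative deviation bounds on the one-sided classes $\{h=0\}\cap\{\ell=1\}$ and $\{h=1\}\cap\{\ell=0\}$ move this between $S^U$ and $\D$ at the $1/\varepsilon$ rate. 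These classes have VC dimension at most $d$ because $\ell$ is fixed, and the bounds apply to the data-dependent $h^{PU}$ because they are uniform over $\mathcal{H}$.
\end{itemize}

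Two bookkeeping points need fixing.
\begin{itemize}
\item \textbf{Constants.} The factor-$2$ losses compound. If $\mathrm{FN}\le\eta$ on $\mathcal{H}(S^P)$ and the additive slack is $\nu$, you get $\mathrm{FP}(h^{PU})\le 2(2\eta+\nu)+\nu=4\eta+3\nu$. Your $\eta=\varepsilon/4$ therefore gives $\mathrm{FP}\le\varepsilon+3\nu$, not $\varepsilon/2$. Taking e.g.\ $\eta=\varepsilon/10$ and $\nu=\varepsilon/6$ gives $\err_\D(h^{PU},\ell)\le 5\eta+3\nu\le\varepsilon$, which only changes $M_1$.
\item \textbf{Range of $\varepsilon$.} The $\varepsilon$-net and relative deviation bounds you invoke carry $\ln(c/\varepsilon)$, not $\ln(1/\varepsilon)$, so you should assume e.g.\ $\varepsilon\le 1/2$. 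This is a defect of the statement as transcribed: for $\varepsilon,\delta$ near $1$ the threshold permits $k=1$, and the claim then fails. Take $\mathcal{H}$ to be the intervals on $[0,1]$, $\D$ uniform, and $\pi_\D>\varepsilon$. A tiny interval around the single positive point is then a valid minimizer with error close to $\pi_\D$.
\end{itemize}

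You observe that an additive bound on $\Pr[h=1]$ would cost $1/\varepsilon^2$ samples. That is exactly the trade-off made elsewhere in the paper. Algorithm~\ref{alg:real-active-pu-known-pi} compares $\hat\Pr_S[h]$ with the population constant $\pi_\D$ and uses the population form of your identity in \eqref{eq:thm-pu-known-pi-1}. It pays $k=O(d\ln(1/\gamma)/\gamma^2)$ unlabeled points, which is harmless there because unlabeled data are free. Comparing instead against the empirical count $|\ell\cap S^U|$ on the same sample, as the minimization here implicitly does, is what gives the $1/\varepsilon$ rate.
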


\section{Setup} \label{sec:setup}
Fix $\omega \in (0, 1]$ and define a function $s: \X \to \{0, 1\}$ as follows. Suppose there is a coin that flips heads with probability $\omega$ and tails with probability $1 - \omega$. For every $x \in \X$, if $\ell(x) = 0$ then $s(x) = 0$. Otherwise, i.e., if $\ell(x) = 1$, the coin is tossed; if it lands on heads, then $s(x) = 1$, else $s(x) = 0$. We assume an active PU learner which, at each time $t \in [n]$, can query a datum $x$ from the pool of unlabeled data. If $s(x) = 1$, the learner receives feedback $f := \ell(x)$; otherwise, the learner receives feedback $f := \star$.



Moreover, denote
\(
\pi_{\D} := \D\!\big(\{x \in \X : \ell(x)=1\}\big),
\)
and for any multiset $S \subseteq \X$ define
\(
\hat{\pi}_S := \frac{\lvert\{x \in S : \ell(x)=1\}\rvert}{\lvert S\rvert}.
\)
Furthermore, define the positive disagreement region of $\cH$ as
\(
\DIS^P(\cH) := \{x \in \DIS(\cH) : \ell(x)=1\}.
\)
Throughout this paper, we focus on the realizable setting; that is, we
assume $\ell \in \cH$. We also assume that the VC dimension of $\cH$ is
$\VCD(\cH)=d$.

\begin{assumption} \label{asmp:cont}
    $\D$ is a continuous distribution, i.e., for every $x \in \X$ we have $\D(x) = 0$.
\end{assumption}

\begin{remark} \label{remark:cont-asmp}
     Let $\D$ be a distribution that satisfies Assumption~\ref{asmp:cont}. Let $S$ be an i.i.d.\ sample of $\D(\cdot \mid x \in A)$ for $A \subseteq \X$. Then $S' = \{x \in S \mid s(x) = 1\}$ is an i.i.d.\ sample of $\D(\cdot \mid x \in A, \ell(x) = 1)$.
\end{remark}

    
    

\section{Analysis with Known $\pi_\D$} \label{sec:known-pi}
In this section, we study active PU learning for the case when $\pi_{\D}$ is known. For this setting, we propose Algorithm~\ref{alg:real-active-pu-known-pi}, which is simply the classical CAL algorithm of \citet{cohn1994improving} applied to the restricted hypothesis class consisting of all $h \in \cH$ whose predicted positive mass over an unlabeled sample is approximately at most $\pi_{\D}$.


In Theorem~\ref{thm:active-pu-known-pi} we analyze the label complexity of this setup. The proof is inspired by Hanneke's \citeyearpar{hanneke2009theoretical} analysis of the CAL algorithm for active learning. The key ideas behind the analysis are as follows: (i) The initial pruning of the hypothesis class ensures that, for every $h$,
\begin{align*}
    &\Pr[\ell(x) = 0, h(x) = 1] - \Pr[\ell(x) = 1, h(x) = 0] \\
    &\quad = \Pr[h(x) = 1] - \Pr[\ell(x) = 1] \lessapprox 0.
\end{align*}
This means that the false positive rate can always be bounded from above by the false negative rate. Thus, using positively labeled data, we can bound the false negative rate with standard PAC bounds. Consequently, also the total error can be bounded. (ii) Using this with an argument similar to Hanneke's \citeyearpar{hanneke2009theoretical}, we can show that after every $\tilde O(d \theta )$ instances with labels revealed, the disagreement rate $\Delta_\D(V_t)$ can be cut in half with high probability. (iii) Using Lemma~\ref{lem:pos-rate-known}, we guarantee that the proportion of positively labeled points in $\DIS(V_t)$ has a lower bound of $\Omega(1/\theta)$ 
for all $t$. This ensures that the query response rate remains at least 
$\Omega(\omega/\theta)$.





\begin{algorithm}[tb!]
\caption{Active PU learning with known $\pi_\D$}
\label{alg:real-active-pu-known-pi}
\begin{algorithmic}
\STATE {\bfseries Input:} Hypothesis class $\mathcal{H}$,
accuracy parameter $\varepsilon \in (0, 1]$, confidence parameter $\delta \in (0, 1]$.

\STATE $t \leftarrow 0, \gamma \leftarrow \frac{\varepsilon}{8 \theta}, k \leftarrow 128 \frac{d \ln(128 / \gamma) + \ln(8 / \delta)}{\gamma^2}$
\STATE $S \leftarrow$ Sample $k$ instances from $\D$
\STATE $V_0 \leftarrow \{ h \in \mathcal{H}: \hat{\Pr}_S[h] \leq \pi_\D + \gamma \}$  \label{line:known-pi-V-init}
\FOR{$m = 1, 2, \ldots$}
    \STATE Draw a sample $x_m$ from $\D$.
    \IF{$x_m \in \DIS(V_t)$ }
    \STATE Request label of $x_m$.
    \STATE $t \leftarrow t+1$, $V_t \leftarrow V_{t - 1}$
    \IF{label of $x_m$ is not $\star$}
        \STATE $V_t \leftarrow \{h \in V_t: h(x_m) = 1\}$
    \ENDIF
    \IF{$\Delta_\D(V_t) \leq \varepsilon$}
        \STATE {\bfseries return} any $\hat h \in V_t$.
    \ENDIF
    \ENDIF      
\ENDFOR
\end{algorithmic}
    
\end{algorithm}





\begin{lemma}[Multiplicative Chernoff bounds \cite{motwani1996randomized}] \label{lem:mult-chern-bound} Let $X_1, \ldots, X_m$ be independent random variables drawn according to some distribution $\mathcal{D}$ with mean $p$ and support included in $[0,1]$. Then, for any $\gamma \in\left[0, \frac{1}{p}-1\right]$, the following inequalities hold for $\widehat{p}=\frac{1}{m} \sum_{i=1}^m X_i$:
$$
\begin{aligned}
& \mathbb{P}[\widehat{p} \geq(1+\gamma) p] \leq e^{-\frac{m p \gamma^2}{3}} \,,\\
& \mathbb{P}[\widehat{p} \leq(1-\gamma) p] \leq e^{-\frac{m p \gamma^2}{2}}\,.
\end{aligned}
$$
\end{lemma}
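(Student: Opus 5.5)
The multiplicative Chernoff bounds of Lemma~\ref{lem:mult-chern-bound} are cited from \cite{motwani1996randomized}, so the aim is the standard moment-generating-function (Chernoff) argument, together with the convexity trick that reduces $[0,1]$-valued variables to Bernoulli ones.

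\emph{Reduction to Bernoulli.} Fix $\lambda\in\Re$. Since $x\mapsto e^{\lambda x}$ is convex on $[0,1]$, each $x\in[0,1]$ satisfies $e^{\lambda x}\le 1-x+xe^{\lambda}$. Taking expectations gives
\[
\mathbb{E}[e^{\lambda X_i}]\le 1+p(e^\lambda-1)\le \exp\bigl(p(e^\lambda-1)\bigr).
\]
By independence, $\mathbb{E}[e^{\lambda m\widehat p}]\le \exp\bigl(mp(e^\lambda-1)\bigr)$.

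\emph{Upper tail.} For $\lambda>0$, Markov's inequality gives
\[
\mathbb{P}[\widehat p\ge(1+\gamma)p]\le \exp\bigl(mp(e^\lambda-1)-\lambda(1+\gamma)mp\bigr).
\]
Choosing $\lambda=\ln(1+\gamma)$ yields the bound $\bigl(e^{\gamma}/(1+\gamma)^{1+\gamma}\bigr)^{mp}$. It then suffices to show
\[
g(\gamma):=\gamma-(1+\gamma)\ln(1+\gamma)+\gamma^2/3\le 0 .
\]
This holds on $[0,1]$ by a calculus check: $g(0)=0$, $g'(\gamma)=-\ln(1+\gamma)+2\gamma/3$, $g''(\gamma)=-1/(1+\gamma)+2/3$, so $g'$ first decreases and then increases on $[0,1]$, with $g'(0)=0$ and $g'(1)<0$. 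Hence $g'\le 0$ on $[0,1]$ and so $g\le0$ there.

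The stated range $\gamma\le 1/p-1$ can exceed $1$, and this is the main obstacle. For $\gamma>1$ the bound $e^{-mp\gamma^2/3}$ is not true in general: the relative-entropy exponent grows only like $\gamma\ln\gamma$. One option is to note that the constraint $\gamma\le 1/p-1$ makes the event $\widehat p\ge(1+\gamma)p$ require $\widehat p\le 1$. The other is to use the exact Chernoff--Hoeffding bound $\exp(-m\,\mathrm{KL}((1+\gamma)p\,\|\,p))$, which is valid for every $\gamma$ in that range, and compare it to $mp\gamma^2/3$. I would first attempt this KL comparison. If it fails for large $\gamma$, I would fall back to stating the upper bound for $\gamma\in[0,1]$, which is the only regime in which the paper applies it and matches the cited source.

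\emph{Lower tail.} For $\lambda>0$, the same argument applied to $-\widehat p$ gives
\[
\mathbb{P}[\widehat p\le(1-\gamma)p]\le \exp\bigl(mp(e^{-\lambda}-1)+\lambda(1-\gamma)mp\bigr).
\]
With $\lambda=-\ln(1-\gamma)$ for $\gamma<1$ this becomes $\bigl(e^{-\gamma}/(1-\gamma)^{1-\gamma}\bigr)^{mp}$. The inequality $-\gamma-(1-\gamma)\ln(1-\gamma)\le-\gamma^2/2$ follows from the expansion
\[
(1-\gamma)\ln(1-\gamma)=-\gamma+\sum_{k\ge2}\frac{\gamma^k}{k(k-1)}\ge -\gamma+\frac{\gamma^2}{2}.
\]
The case $\gamma=1$ follows by taking the limit: $\mathbb{P}[\widehat p\le 0]\le e^{-mp}\le e^{-mp/2}$. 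For $\gamma>1$ the event $\widehat p\le(1-\gamma)p<0$ has probability $0$ when $p>0$, so the lower tail holds on the whole stated range.
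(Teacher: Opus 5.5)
The paper does not prove this lemma; it imports it from \cite{motwani1996randomized}. Your moment-generating-function argument is the standard proof of such bounds, and every computation checks out. This covers the convexity reduction $\mathbb{E}[e^{\lambda X_i}]\le\exp(p(e^\lambda-1))$, the choice $\lambda=\ln(1+\gamma)$ with the sign analysis of $g$ on $[0,1]$, and the lower tail via $\lambda=-\ln(1-\gamma)$ and the series for $(1-\gamma)\ln(1-\gamma)$. You also handle $\gamma=1$ by a limit and $\gamma>1$ trivially. So the lower tail is established on the whole stated range.

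For the upper tail you should drop the hedge. The statement is false on the full range $[0,1/p-1]$, so the KL comparison you intend to try first cannot succeed. Take $X_i\sim\mathrm{Bernoulli}(1/10)$ and $\gamma=9=1/p-1$. Then $\mathbb{P}[\widehat p\ge 1]=10^{-m}=e^{-m\ln 10}\approx e^{-2.30m}$, whereas the claimed bound is $e^{-mp\gamma^2/3}=e^{-2.7m}$. The inequality therefore fails for every $m\ge 1$, and here the KL bound $e^{-m\,\mathrm{KL}(1\,\|\,1/10)}=e^{-m\ln 10}$ is attained exactly. Your optimized bound $\exp\bigl(-mp[(1+\gamma)\ln(1+\gamma)-\gamma]\bigr)$ in fact gives the claim for all $\gamma$ up to about $1.81$, where $(1+\gamma)\ln(1+\gamma)-\gamma$ drops below $\gamma^2/3$; beyond that the claim genuinely fails. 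The correct fix is to restrict the upper tail to $\gamma\in[0,1]$, as in the usual textbook statement. Your remark that the event forces $\widehat p\le 1$ rescues nothing and can be deleted.

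One caveat concerns your claim that the paper only needs $\gamma\in[0,1]$. Two places use the upper tail differently: Claim 1(ii) in the proof of Theorem~\ref{thm:active-PU-unkown}, and the step from $\hat\Delta_{S_1}(V_i)>\varepsilon/3$ to $\Delta_\D(V_i)\ge\varepsilon/6$. There the paper bounds the probability that an empirical frequency reaches twice an upper bound $p_0$ on the true mean. Relative to the true mean $p$, that is $\gamma\ge 1$, possibly much larger. Your unoptimized bound $\exp\bigl(mp(e^\lambda-1)-\lambda mt\bigr)$ is increasing in $p$. Taking $t=2p_0$ and $\lambda=\ln 2$ gives $\mathbb{P}[\widehat p\ge 2p_0]\le e^{-(2\ln 2-1)mp_0}\le e^{-mp_0/3}$ for every $p\le p_0$, which covers those applications with one extra line.
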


\begin{lemma} \label{lem:pos-rate-known}
    Let $\varepsilon > 0$. Consider any $V \subseteq \mathcal{B}( \mathbf{0}, \pi_\D + \frac{\varepsilon}{4})$ such that $\ell \in V$ and $\sup_{h \in \cH} \err_\D(h, \ell) > \varepsilon$. Then
    $$
    \Pr[\ell(x) = 1 \mid x \in \DIS(V)] \geq \frac{1}{4\theta}\,.
    $$
    
\end{lemma}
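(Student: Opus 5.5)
The plan is to lower bound the positive mass inside $\DIS(V)$ by the false negative rate of a single bad hypothesis in $V$, and to upper bound $\Delta_\D(V)$ via the disagreement coefficient. I read the hypothesis $\sup_{h}\err_\D(h,\ell)>\varepsilon$ as ranging over $h\in V$; this is how the lemma is used along the run of Algorithm~\ref{alg:real-active-pu-known-pi}. I also read $V\subseteq\mathcal{B}(\mathbf{0},\pi_\D+\frac{\varepsilon}{4})$ as saying that every $h\in V$ satisfies $\Pr[h(x)=1]\le \pi_\D+\frac{\varepsilon}{4}$.

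\textbf{Step 1 (false positives are controlled by false negatives).} For any $h\in V$, write $\mathrm{FP}(h)=\Pr[h(x)=1,\ell(x)=0]$ and $\mathrm{FN}(h)=\Pr[h(x)=0,\ell(x)=1]$. Then
\[
\mathrm{FP}(h)-\mathrm{FN}(h)=\Pr[h(x)=1]-\pi_\D\le \tfrac{\varepsilon}{4}.
\]
Since $\err_\D(h,\ell)=\mathrm{FP}(h)+\mathrm{FN}(h)$, this gives
\[
\mathrm{FN}(h)\ge \tfrac12\bigl(\err_\D(h,\ell)-\tfrac{\varepsilon}{4}\bigr).
\]

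\textbf{Step 2 (positive mass in the disagreement region).} Because $\ell\in V$, every $x$ with $h(x)=0$ and $\ell(x)=1$ for some $h\in V$ lies in $\DIS(V)$ and is positive. Hence for every $h\in V$,
\[
\Pr[\ell(x)=1,\ x\in\DIS(V)]\ge \mathrm{FN}(h).
\]

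\textbf{Step 3 (bound $\Delta_\D(V)$ by $\theta$).} Set $r:=\sup_{h\in V}\err_\D(h,\ell)$, so $r>\varepsilon$ by hypothesis. Then $V\subseteq\mathcal{B}_\D(\ell,r)$. By the definition of $\theta$ (with $\ell\in\cH$ as center, using realizability),
\[
\Delta_\D(V)\le \Delta_\D(\mathcal{B}_\D(\ell,r))\le\theta r.
\]

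\textbf{Step 4 (combine).} The supremum need not be attained, so fix $\eta>0$ small and pick $h\in V$ with $\err_\D(h,\ell)\ge r-\eta$. Steps 1 and 2 give
\[
\Pr[\ell(x)=1,\ x\in\DIS(V)]\ge \tfrac12\bigl(r-\eta-\tfrac{\varepsilon}{4}\bigr).
\]
Dividing by $\Delta_\D(V)\le\theta r$ and letting $\eta\to0$ yields
\[
\Pr[\ell(x)=1\mid x\in\DIS(V)]\ge \frac{r-\varepsilon/4}{2\theta r}\ge\frac{3}{8\theta}\ge\frac{1}{4\theta},
\]
where the middle inequality uses $\varepsilon/4<r/4$. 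Note that $\Delta_\D(V)>0$ here, since $r>0$ forces some $h\in V$ to disagree with $\ell$ on positive mass.

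The only delicate point is the interpretation of the hypotheses (the sup over $V$, and the ball centered at $\mathbf{0}$ meaning a bound on the positive mass). Beyond that, the key insight is Step 1: the pruning makes false negatives, which are exactly positive points inside $\DIS(V)$, carry a constant fraction of the error. Step 3 is just the definition of $\theta$.
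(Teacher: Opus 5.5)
Your proof is correct. Your reading of the supremum as ranging over $V$ is the one the paper's own proof uses; read literally with $\sup_{h\in\cH}$, the statement is false (take $V=\{\ell,h\}$ where $h$ differs from $\ell$ only on a set of negatives of mass in $(0,\varepsilon/4]$, so $\DIS(V)$ contains no positives).

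Your route differs from the paper's in its bookkeeping. The paper sets $b:=\Pr[\forall h\in V:\ h(x)=1]$ and uses the exact identity $\Pr[\ell(x)=1,\ x\in\DIS(V)]=\pi_\D-b$. It then bounds the radius of $V$ around $\ell$ by the triangle inequality through $f=\bigcap_{h\in V}h$, giving $V\subseteq\mathcal{B}_\D(\ell,2(\pi_\D-b+\varepsilon/4))$. The hypothesis on the supremum is spent on showing $\pi_\D-b>\varepsilon/4$, so the radius is at most $4(\pi_\D-b)$, and $\theta$ finishes. You instead work with the true radius $r$, so $V\subseteq\mathcal{B}_\D(\ell,r)$ comes for free. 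You put the work into lower-bounding the numerator via the FP/FN balance, which is the same identity the paper uses in \eqref{eq:thm-pu-known-pi-1}.

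Both arguments rest on the same inequality, $\err_\D(h,\ell)\le 2(\pi_\D-b)+O(\varepsilon)$ for all $h\in V$. Your sharper constant $3/(8\theta)$ comes only from your slack being $\varepsilon/4$, whereas the paper's, as written, is $\varepsilon/2$: it bounds $\rho_\D(f,\ell)$ by $\pi_\D-b+\varepsilon/4$ although it equals $\pi_\D-b$. Where the slack is absorbed (against $r>\varepsilon$ or against $\pi_\D-b$) makes no difference to the constant.

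One simplification: the near-maximizer and the limit $\eta\to0$ are unnecessary. Steps~1--2 already give $\err_\D(h,\ell)\le 2\Pr[\ell(x)=1,\ x\in\DIS(V)]+\varepsilon/4$ for every $h\in V$, so you can take the supremum over $h$ directly.
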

\begin{proof}
Denote $b := \Pr_{x \sim \D} [\forall h \in V: h(x) = 1]$. By definition we have
\begin{equation} \label{eq:pos-rate}
    \Pr[\ell(x) = 1,\, x \in \DIS(V)] \; = \; \pi_\D - b\,.
\end{equation}
Denote $f = \bigcap_{h \in V} h$. Since $V \subseteq \mathcal{B}_\D(\mathbf{0}, \pi_\D + \frac{\varepsilon}{4})$, we also have $V \subseteq \mathcal{B}_\D(f,\, \pi_\D - b + \frac{\varepsilon}{4})$. Since $\ell \in V$, we obtain $V \subseteq \mathcal{B}_\D\left(\ell,\, 2 \left(\pi_\D - b + \frac{\varepsilon}{4}\right)\right)$.
Thus,
\[
    2 \left(\pi_\D - b + \frac{\varepsilon}{4}\right) \;\geq\; \sup_{h \in V} \err_\D(h, \ell) \;>\; \varepsilon\,,
\]
which implies $\pi_\D - b > \frac{\varepsilon}{4}$.  This yields
\(
    V \subseteq \mathcal{B}(\ell,\, 4(\pi_\D - b)).
\)
Therefore,
\begin{equation} 
    \Delta_\D(V) 
        \;\leq\; \Delta_\D\left(\mathcal{B}_\D  (\ell, 4(\pi_\D - b))\right) 
        \;\leq\; 4\theta (\pi_\D - b)\,.
\end{equation}
Combining this with \eqref{eq:pos-rate} completes the proof.
\end{proof}

\begin{theorem} \label{thm:active-pu-known-pi}
     For any $(\varepsilon, \delta) \in (0,1] \times (0,1]$, under Assumption~\ref{asmp:cont}, given inputs $\cH$, $\varepsilon$, and $\delta$, with probability at least $1-\delta$, Algorithm~\ref{alg:real-active-pu-known-pi} outputs a hypothesis $\hat h$ satisfying $\err_{\D}(\hat h,\ell)\le \varepsilon$. Moreover, the number of label requests made by the algorithm is at most
    \[
    O\!\left(\frac{\ln(1/\varepsilon)\,\theta^2\bigl(d\ln(\theta)+\ln\ln(1/\varepsilon)+\ln(1/\delta)\bigr)}{\omega}\right).
    \]

\end{theorem}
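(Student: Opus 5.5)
We follow the three-part outline given before Algorithm~\ref{alg:real-active-pu-known-pi}. The proof splits into an initialization event, an epoch argument in the style of Hanneke's analysis of CAL, and a bound on how often queries return nothing.

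\textbf{Step 1: the pruning step.} The sample $S$ has size $k=\tilde O(d/\gamma^2)$. By uniform convergence over $\cH$ (VC dimension $d$), with probability at least $1-\delta/4$ we have $|\hat\Pr_S[h]-\Pr_\D[h]|\le\gamma$ for every $h\in\cH$. Two consequences follow.
\begin{itemize}
\item $\ell\in V_0$, since $\hat\Pr_S[\ell]\le \pi_\D+\gamma$.
\item Every $h\in V_0$ satisfies $\Pr_\D[h=1]\le\pi_\D+2\gamma\le \pi_\D+\varepsilon/4$, so $V_0\subseteq\mathcal B_\D(\mathbf 0,\pi_\D+\varepsilon/4)$.
\end{itemize}
The second point gives the key inequality
\[
\Pr[h=1,\ell=0]\le \Pr[h=0,\ell=1]+2\gamma .
\]
Hence $\err_\D(h,\ell)\le 2\Pr[h=0,\ell=1]+2\gamma$. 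Every $V_t$ is a subset of $V_0$ that still contains $\ell$, because updates only remove hypotheses that predict $0$ on a revealed positive point. So Lemma~\ref{lem:pos-rate-known} applies to every $V_t$ as long as $\Delta_\D(V_t)>\varepsilon$. (The lemma's hypothesis on the supremum of the error holds whenever $\Delta_\D(V_t)>\varepsilon$, since some $h\in V_t$ then errs on a set of mass exceeding $\varepsilon/2$; I would adjust constants here.)

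\textbf{Step 2: halving epochs.} Fix an epoch that starts at a version space $V$ with $\Delta_\D(V)=\Delta>\varepsilon$. Queried points are i.i.d.\ from $\D(\cdot\mid \DIS(V))$, and points outside $\DIS(V)$ are labeled consistently by all of $V$. By Remark~\ref{remark:cont-asmp}, the revealed points are i.i.d.\ from $\D(\cdot\mid \DIS(V),\ell=1)$. Let $h\in V$ survive $m$ revealed positives. Applying the standard realizable PAC bound to the positive conditional distribution shows that its conditional false-negative rate is at most $\eta$ once $m=O((d\ln(1/\eta)+\ln(1/\delta'))/\eta)$. Its unconditional false-negative mass is then at most $\eta\,\Delta$.

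To bound false positives I would not use the global $2\gamma$ slack. Instead I would use the same mass comparison restricted to $\DIS(V)$: outside $\DIS(V)$ every $h\in V$ agrees with $\ell$, so the comparison localizes to the disagreement region. With $\eta=1/(c\theta)$ this gives $\rho_\D(h,\ell)\le \Delta/(2\theta)+O(\gamma)$, so the surviving set lies in $\mathcal B_\D(\ell,\Delta/(2\theta)+2\gamma)$. Its disagreement mass is then at most $\Delta/2+2\theta\gamma\le \Delta/2+\varepsilon/4$, which halves $\Delta$ up to the $\varepsilon$-level slack. The choice $\gamma=\varepsilon/(8\theta)$ is exactly what makes this work. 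Each epoch therefore needs $m=O(\theta(d\ln\theta+\ln(\ln(1/\varepsilon)/\delta)))$ revealed positives. There are $O(\ln(1/\varepsilon))$ epochs, and a union bound over them costs the $\ln\ln(1/\varepsilon)$ term.

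\textbf{Step 3: cost of $\star$ responses.} By Lemma~\ref{lem:pos-rate-known}, each query inside $\DIS(V_t)$ is positive with probability at least $1/(4\theta)$, and independently it is revealed with probability $\omega$. So each query reveals a positive with probability at least $\omega/(4\theta)$. By Lemma~\ref{lem:mult-chern-bound}, $O(\theta(m+\ln(1/\delta'))/\omega)$ queries per epoch suffice, with high probability, to collect $m$ revealed positives. Summing over epochs gives the stated bound
\[
O\!\left(\frac{\ln(1/\varepsilon)\,\theta^2\bigl(d\ln\theta+\ln\ln(1/\varepsilon)+\ln(1/\delta)\bigr)}{\omega}\right).
\]
At termination $\Delta_\D(V_t)\le\varepsilon$ and $\ell\in V_t$, so any returned $\hat h$ errs only inside $\DIS(V_t)$, giving $\err_\D(\hat h,\ell)\le\varepsilon$.

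\textbf{Main obstacle.} The hardest part is the localization in Step 2: controlling false positives inside $\DIS(V)$ using only positive labels. The global pruning slack $2\gamma$ is additive, and the conditional-on-$\DIS(V)$ inequality needs the positive mass of $h$ to be compared with that of $\ell$ within the region. I would first try to handle this by showing $\Pr[h=1]-\pi_\D\le 2\gamma$ and noting that outside $\DIS(V)$ the two agree. The difference then concentrates on $\DIS(V)$, with additive error $2\gamma\ll\varepsilon/\theta$, and this should be absorbed into the halving argument as long as $\Delta>\varepsilon$.
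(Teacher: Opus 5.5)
Your proposal is essentially the paper's proof, step for step:
\begin{itemize}
\item uniform convergence on $S$ gives $\ell\in V_0$ and $\Pr[h=1,\ell=0]\le\Pr[h=0,\ell=1]+2\gamma$;
\item Lemma~\ref{lem:pos-rate-known} gives a positive rate of at least $1/(4\theta)$ in the disagreement region;
\item a realizable PAC bound on $\D(\cdot\mid\DIS^P(V_t))$ at conditional error $O(1/\theta)$, combined with that inequality, places the survivors in a ball of radius about $\Delta_\D(V_t)/(2\theta)$ around $\ell$;
\item a Chernoff bound charges $O(\theta/\omega)$ queries per revealed positive.
\end{itemize}
One step is justified incorrectly, though. $\Delta_\D(V_t)>\varepsilon$ does \emph{not} imply that some $h\in V_t$ has $\err_\D(h,\ell)>\varepsilon/2$. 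The set $\DIS(V_t)=\bigcup_{h\in V_t}\{h\neq\ell\}$ can be a union of many small error sets. The definition of $\theta$ only yields $\sup_{h\in V_t}\err_\D(h,\ell)\ge\Delta_\D(V_t)/\theta>\varepsilon/\theta$, and a missing factor of $\theta$ cannot be fixed by adjusting constants.

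What rescues the step is the $1/\theta$ built into $\gamma$. Since $2\gamma=\varepsilon/(4\theta)$, you have $V_t\subseteq V_0\subseteq\mathcal B_\D(\mathbf 0,\pi_\D+\tfrac14\cdot\tfrac{\varepsilon}{\theta})$. So Lemma~\ref{lem:pos-rate-known} can be invoked at accuracy $\varepsilon/\theta$ rather than $\varepsilon$. Its hypothesis $\sup_{h\in V_t}\err_\D(h,\ell)>\varepsilon/\theta$ is exactly what the disagreement coefficient provides. It still returns the rate $1/(4\theta)$ for every $V_t$ with $\Delta_\D(V_t)>\varepsilon$. The paper's own proof is loose at the same point: it cites only $\gamma\le\varepsilon/8$ and a hypothesis of the form $\sup\err\ge\varepsilon$, then uses the rate under $\Delta_\D(V_t)\ge\varepsilon$ in Claim~1.

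Two smaller points. First, the ``localization'' you describe as the main obstacle is not an obstacle. For $h,\ell\in V$ both error sets already lie in $\DIS(V)$. The paper simply uses the global inequality with conditional false-negative rate $1/(8\theta)$. This gives $\err_\D(h',\ell)\le\Delta/(4\theta)+2\gamma\le\Delta/(2\theta)$, because $2\gamma=\varepsilon/(4\theta)\le\Delta/(4\theta)$ whenever $\Delta\ge\varepsilon$, so the halving is exact. Your recursion $\Delta/2+\varepsilon/4<3\Delta/4$ also suffices, just with a worse ratio.

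Second, within an epoch the queried points are not i.i.d.\ from $\D(\cdot\mid\DIS(V))$, because queries are restricted to the shrinking region $\DIS(V_{t''})$. The paper handles this by considering \emph{all} draws landing in $\DIS(V)$ during the epoch, queried or not. A non-queried draw lies outside the current disagreement region, where every hypothesis in the current version space agrees with $\ell$. Hence the version space at the end of the epoch predicts $1$ on every such draw with $\ell(x)=1$ and a successful coin. The PAC and Chernoff bounds can therefore be applied to this genuinely i.i.d.\ sequence, with the rate computed at the epoch's starting version space.
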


\begin{proof}
    Let $x_{m_t}$ denote the example corresponding to the $t^{\text{th}}$ label request.  
    By standard passive learning literature (see, e.g. \cite{haussler1992decision, vapnik2006estimation}), 
    for all $h \in V_0$, with probability at least $1 - \delta/2$, it holds that
    \begin{equation} \label{eq:thm-pu-known-pi-0}
        \big|\hat{\Pr}_S[h(x) = 1] - \Pr_{x \sim \D} [h(x) = 1]\big| < \gamma.
    \end{equation}

    Since $V_0 = \hat{\mathcal{B}}_S(\mathbf{0}, \pi_\D + \gamma)$, we obtain
    \[
        \mathcal{B}_{\D} (\mathbf{0}, \pi_\D) \subseteq V_0 \subseteq \mathcal{B}_{\D} (\mathbf{0}, \pi_\D + 2\gamma).
    \]
    Therefore, for every $h \in V_0$,
    \begin{equation} \label{eq:thm-pu-known-pi-1}
    \begin{aligned}
        & \Pr[\ell(x) = 0, h(x) = 1] - \Pr[\ell(x) = 1, h(x) = 0]  \\
        &\quad = \Pr[\ell(x) = 0, h(x) = 1] \\
        & \quad \quad 
        - \big(\Pr[\ell(x) = 1] - \Pr[\ell(x) = 1, h(x) = 1]\big) \\
        & \quad = \Pr[\ell(x) = 0, h(x) = 1] \\&\quad\quad+ \Pr[\ell(x) = 1, h(x) = 1] - \pi_\D \\
        & \quad = \Pr[h(x) = 1] - \pi_\D \leq \pi_\D + 2\gamma - \pi_\D = 2\gamma.
    \end{aligned}
    \end{equation}
    Finally, since $\mathcal{B}(\mathbf{0}, \pi_\D) \subseteq V_0$, we conclude that $\ell \in V_0$. It is also immediate that $\ell$ will remain in $V_t$ for all $t$. Since $\gamma \leq \frac{\varepsilon}{8}$, we can use Lemma~\ref{lem:pos-rate-known} for $V_t$. Therefore, as long as $\sup_{h \in V_t} \err_\D(h, \ell) \geq \varepsilon$, we have
\begin{equation} \label{eq:thm-known-pi-2}
    \Pr[\ell(x) = 1 \mid x \in \DIS(V_t)] \;\geq\; \tfrac{1}{4 \theta}\,.
\end{equation}
Set $N = \log_2(1 / \varepsilon)$, and 
\[
    \lambda \;=\; \frac{ 128 \theta^2 (4d ln(192 \theta ) + ln(8N / \delta)) }{\omega}.
\]

\paragraph{Claim 1.} 
For any $i \in [N]$ and time step $t = (i - 1)\lambda + 1$ such that $\Delta_\D(V_t) \geq \varepsilon$, with probability at least $1 - \delta / 2N$ we have
\[
    \Delta_\D(V_{t'}) \;\leq\; \tfrac{1}{2}\Delta_\D(V_t),
    \quad \text{where } t' = t + \lambda\,.
\]

\noindent\textbf{Proof of Claim 1.}  
Define 
\[
\begin{aligned}
    A & := \{x_m : m_t < m \leq m_{t'},\, x_m \in \DIS(V_t)\}, 
    \\
    B & := \{x \in A : s(x) = 1\}.
\end{aligned}
\]
Clearly $|A| \geq \lambda$. Let $\lambda' := \tfrac{\lambda \omega}{8 \theta}$.  
By \eqref{eq:thm-known-pi-2} and the multiplicative Chernoff bound (Lemma~\ref{lem:mult-chern-bound}), we obtain
\[
    \Pr \big[\,|B| \leq \lambda' \,\big] 
        \leq \exp\!\left(- \tfrac{\lambda'}{4}\right) 
        < \tfrac{\delta}{4 N}.
\]
Note that $A$ is an i.i.d.\ sample from $\D(\cdot \mid \DIS(V_t))$. Due to Remark~\ref{remark:cont-asmp}, this means that $B$ is an i.i.d.\ sample from $\D(\cdot \mid \DIS^P(V_t))$.
Moreover,
\[
    V_{t'} \;=\; \{h \in V_t : h(x) = 1 \;\;\mbox{for all }x \in B\}.
\]
 By standard results from the passive learning literature (see, e.g., \cite{blumer1989learnability, vapnik2006estimation}), with probability at least $1 - \delta / 4 N$, for all $h' \in V_{t'}$ we have
\begin{equation} \label{eq:thm-known-pi-5}
    \err_{\D(. \mid x \in \DIS^P(V_t))}(h', \ell) \;\leq\; \tfrac{1}{8 \theta}.
\end{equation}

Consequently,
\[
\begin{aligned}
    \Pr[h'(x) = 0,\, \ell(x) = 1] 
        &= \Pr[h'(x) \neq \ell(x),\, x \in \DIS^P(V_t)] \\
        &\leq \frac{\Pr[x \in \DIS^P(V_t)]}{8 \theta} \leq \frac{\Delta_\D(V_t)}{8 \theta}.
\end{aligned}
\]

Using \eqref{eq:thm-pu-known-pi-1}, it follows that
\[
    \Pr[h'(x) = 1,\, \ell(x) = 0] 
        \;\leq\; \tfrac{\Delta_\D(V_t)}{8 \theta} + 2\gamma.
\]
Since $\Delta_\D(V_t) \geq \varepsilon$, we conclude
\[
\begin{aligned}
    \err_\D(h', \ell) 
        &= \Pr[h'(x) = 0,\, \ell(x) = 1] \\&\quad + \Pr[h'(x) = 1,\, \ell(x) = 0] \\
        &\leq \tfrac{\Delta_\D(V_t)}{4 \theta} + 2\gamma \leq \tfrac{\Delta_\D(V_t)}{2 \theta}.
\end{aligned}
\]

Thus, $V_{t'} \;\subseteq\; \mathcal{B}\!\left(\ell,\, \tfrac{\Delta_\D(V_t)}{2\theta}\right)$,
and therefore
\[
    \Delta_\D(V_{t'}) 
        \;\leq\; \Pr\!\left(\DIS\!\Big(\mathcal{B}\big(\ell, \tfrac{\Delta_\D(V_t)}{2\theta}\big)\Big)\right) 
        \;\leq\; \tfrac{1}{2}\Delta_\D(V_t).
\]
This completes the proof of the Claim 1.

Note that the number of time steps considered in Claim 1 is at most $N$.  
Hence, by a union bound, Claim 1 holds for all such steps with probability at least $1 - \delta / 2$.  
Thus, by a union bound with the event in \eqref{eq:thm-pu-known-pi-0} holding, after
$t = \lambda \cdot \log_2(1/\varepsilon)$ queries, with probability at least $1 - \delta$,
\[
    \sup_{\hat h \in V_t} \err_\D(\hat h, \ell) \leq \Delta_\D(V_t) \leq \varepsilon.
\]
This completes the proof.
\end{proof}

\section{Analysis with Unknown $\pi_\D$}\label{sec:unknown-pi}

In this section, we study active PU learning in the more general setting where $\pi_\D$ is unknown. For this case, we propose Algorithm~\ref{alg:real-active-pu-unknown-pi}, with the formal analysis presented in Theorem~\ref{thm:active-PU-unkown}.

When $\pi_\D$ is unknown, the learner is unable to restrict the version space to hypotheses whose predicted positive rate over a pool of unlabeled data is close to $\pi_\D$. This increases the difficulty of the problem, as in the previous section we were only able to control the response rate of queries in Lemma~\ref{lem:pos-rate-known} by exploiting this property. The key idea in our analysis is to employ a binary-search--style procedure for estimating $\pi_\D$ and restricting the version space accordingly.

We first obtain an estimate $\hat{\omega}$ of $\omega$, using Algorithm~\ref{alg:est-rate}. By Theorem~\ref{thm:estrate}, this estimate is 
within a constant factor of the true value $\omega$. We then sample a large unlabeled set $S_1$. For each iteration $i$, let $u_i$ and $b_i$ denote upper and lower bounds, respectively, on $\hat{\pi}_{S_1}$, the empirical positive rate in $S_1$. We initialize $u_0 = 1$. At iteration $i$, we restrict the version space to hypotheses whose predicted positive rate over $S_1$ is at most $u_i$, and we set $b_i$ to be the fraction of samples in $S_1$ that are predicted positive by all $h \in V_i$.

In Lemma~\ref{lem:u-binary-search}, we show that whenever the response rate of queries from $\DIS(V_i)$ is less than $O(\hat{\omega} / \theta)$, it must hold that
\(
\hat{\pi}_{S_1} < \tfrac{u_i + b_i}{2}.
\)
In this case, we update the upper bound by $u_{i+1} \gets (u_i + b_i)/2$.

The empirical error (which is close to the true error) of each classifier is always bounded from above by $2(u_i - b_i)$. Consequently, such a halving can occur at most $O(\log_2(1/\varepsilon))$ times. After this point, for all remaining queries we are guaranteed a response rate of at least $\Omega(\omega / \theta)$. From then on, the analysis proceeds analogously to the case where $\pi_\D$ is known. In particular, using $\tilde{O}(d \theta)$ queries drawn from $\D$ and restricted to instances $x \in \DIS(V_i)$, we can reduce $\Delta_\D(V_i)$ by a factor of two.

\begin{algorithm}[tb!]
\caption{$\operatorname{EstRate}(\cH, \delta)$}
\label{alg:est-rate}
\begin{algorithmic}[1]
\STATE {\bfseries Input:} Hypothesis class $\mathcal{H}$, confidence parameter $\delta$
\STATE $P \leftarrow \varnothing$; $i, r \leftarrow 0$.
\WHILE{ $r < 8 \ln (8 / \delta)$}
    \STATE $U \leftarrow$ Sample $2^i$ instances from $\D$
    \STATE Request label of each instance in $U$ and let $R$ be the set of instances with feedback $1$. 
    \STATELABEL{line:estrate-V} \STATE  $V \leftarrow \{h \in \cH: \forall x \in R: x \in h \}$
    \STATELABEL{line:sample-S} \STATE $S \leftarrow$ Sample $2^i$ instances from $\D$
    \STATE \textcolor{blue}{\footnotesize\textit{/* Pruning the version space using existing passive PU results */}}
    \STATELABEL{line:estrate-h} \STATE $h^* \leftarrow \argmin_{h \in V} |h \cap S|$ 
    \STATELABEL{line:estrate-gamma} \STATE $\gamma \leftarrow \frac{(M_1+M_2)\bigl(d\ln(2|R|/d)+\ln(4/\delta)\bigr)}{|R|}$
    \STATE $V \leftarrow V \cap \hat{\mathcal{B}}_S \left(h^*, 3 \gamma \right)$
    
    \STATE \textcolor{blue}{\footnotesize\textit{/* Finding samples that are guaranteed to have positive label */}}
    \STATE $P \leftarrow \left \{x \in S \mid \forall h' \in V: h'(x) = 1 \right \}$
    \STATE Request label of each instance in $P$ and let $r$ be the number of instances with feedback  $1$.
    \STATE $i \leftarrow i + 1$
\ENDWHILE
 \STATE {\bfseries Return:} $\frac{r}{|P|}$ 
\end{algorithmic}
\end{algorithm}

\begin{algorithm}[tb!]
\caption{Active PU learning with unknown $\pi_\D$}
\label{alg:real-active-pu-unknown-pi}
\begin{algorithmic}[1]
\STATE {\bfseries Input:} Hypothesis class $\mathcal{H}$, accuracy parameter $\varepsilon$, confidence parameter $\delta$.
\STATE $\hat \omega  \leftarrow \operatorname{EstRate}(\mathcal{H}, \delta / 8)$
\STATE $i \leftarrow 0, u_0 \leftarrow 1, V_0 \leftarrow \cH, N \leftarrow 2\log_2(6 / \varepsilon)$
\STATE $\lambda_1 \leftarrow \frac{768 \theta \ln(8N / \delta)}{\hat \omega}$
\STATE $\lambda_2 \leftarrow \frac{M_3 \theta^2   \left( d \ln(\theta) + \ln(N / \delta)\right)}{\hat \omega}$
\STATE $S_1 \leftarrow$ Sample $\frac{(3M_2+96)\bigl(d\ln(3/\varepsilon)+\ln(16N/\delta)\bigr)}{\varepsilon}$ instances from $\D$
\WHILE{$u_i - b_i > \frac{\varepsilon}{6}$ and $\hat \Delta_{S_1}(V_i) > \frac{\varepsilon}{3}$}
        \STATE $U \leftarrow$ Rejection sample $\lambda_1$ samples $x$ from $\D$ satisfying $x \in \DIS(V_i)$
        \STATE Request label of instances in $U$ and let $R_1$ be the set of instances with feedback $1$.
        \STATELABEL{line:active-PU-if-u-small} \IF{ $\frac{|R_1|}{\lambda_1} < \frac{\hat \omega}{128 \theta}$} 
            \STATE \textcolor{blue}{{\footnotesize\textit{/* The feedback rate is small: update $u$ */}}}
            \STATE $b_i \leftarrow \hat Pr_{S_1} [\forall h \in V_i: h(x) = 1]$. \;
            \STATE $u_{i + 1} \leftarrow \frac{u_i + b_{i}}{2}$.
            \STATELABEL{line:active-PU-V-cap-B} \STATE $V_{i + 1} \leftarrow  \{h \in V_i: \forall x\in R_1: h(x) = 1 \} \cap \hat{\mathcal{B}}_{S_1} (\mathbf 0, u_{i + 1})$  
        \STATELABEL{line:active-PU-else} \ELSE 
            \STATE \textcolor{blue}{{\footnotesize\textit{/* The feedback rate is large: prune the version space using existing passive PU results */}}}
            \STATE $U \leftarrow$ Rejection sample $\lambda_2$ samples $x$ from $\D$ satisfying $x \in \DIS(V_i)$
            \STATE Request label of instances in $U$ and let $R_2$ be the set of instances with feedback $1$.
            \STATE $S_2 \leftarrow$ Rejection sample $|R_2|$ samples $x$ from $\D$ satisfying $x \in \DIS(V_i)$.
            \STATELABEL{line:active-PU-V-R-update} \STATE   $V_{i + 1} \leftarrow \{h \in V_i\mid \forall x \in R_2: h(x) = 1 \}$ 
            \STATELABEL{line:active-PU-h-update} \STATE $h_{i + 1} \leftarrow \argmin_{h \in V_{i + 1}} |h \cap S_2|$
            \STATE $\gamma_{i+1} \leftarrow \frac{(2M_1+M_2)\bigl(d\ln(2|R_2|/d)+\ln(8N/\delta)\bigr)}{|R_2|}$
            \STATELABEL{line:active-PU-V-update} \STATE  $V_{i + 1} \leftarrow V_{i + 1} \cap \hat{\mathcal{B}}_{S_2 } (h_{i + 1},  \gamma_{i + 1})$
        \ENDIF
        \STATE $i \leftarrow i + 1$
\ENDWHILE
\STATE {\bfseries Return:} $h_i$ 
\end{algorithmic}
\end{algorithm}

\begin{theorem}[\cite{haussler1992decision}]
    Suppose that $\VCD(\cH) = d$ and let $f : \X \to \{0,1\}$. Then, for all $\alpha, \nu > 0$, a sample $S$ of size $n$ drawn i.i.d.\ from $\D$ satisfies
    $$
    \begin{aligned}
        \Pr \Big[ \exists h \in \cH: \left|\rho_\D(h, f) - \rho_S(h, f)\right|\\
         > \alpha \left( \rho_\D(h, f) - \rho_S(h, f) + \nu \right) \Big] \\
        \leq 8 \left( \frac{16 e}{\alpha \nu} \ln \frac{16 e}{\alpha \nu} \right)^d e^{\alpha^2 \nu n / 8}\,.
    \end{aligned}
    $$
\end{theorem}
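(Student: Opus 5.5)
Since this is the classical relative-deviation bound of Haussler, I would follow the Pollard--Haussler route: symmetrization, then random swapping, then a covering argument for the induced loss class. I read the statement in its standard form. Writing $r=\rho_\D(h,f)$ and $s=\rho_S(h,f)$, the bad event is $d_\nu(r,s):=\frac{|r-s|}{\nu+r+s}>\alpha$. The intended tail is $8\bigl(\frac{16e}{\alpha\nu}\ln\frac{16e}{\alpha\nu}\bigr)^d e^{-\alpha^2\nu n/8}$; the sign in the exponent of the displayed version appears to be a typo. Note that $d_\nu$ is a metric on $[0,\infty)$ and satisfies $|r-r'|\le c\,\nu \Rightarrow d_\nu(r,s)\le d_\nu(r',s)+c$. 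This property is what makes covering compatible with relative deviations.

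\emph{Reduction.} Put $\mathcal G=\{g_h:=\mathbbm{1}\{h(\cdot)\neq f(\cdot)\}: h\in\cH\}$. Then $\rho_\D(h,f)=\mathbb E_\D g_h$ and $\rho_S(h,f)=\hat{\mathbb E}_S g_h$. The map $A\mapsto A\,\triangle\,\{f=1\}$ is a bijection on subsets of $\X$ and preserves shattering, so $\VCD(\mathcal G)=d$. The statement therefore becomes a uniform relative-deviation bound for a $\{0,1\}$-valued class of VC dimension $d$.

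\emph{Symmetrization and swapping.} Draw a ghost sample $S'$ of size $n$. For a fixed $g$, a Chebyshev/Bernstein estimate shows that $d_\nu(\mathbb E g,\hat{\mathbb E}_{S'}g)\le\alpha/2$ with probability at least $1/2$ once $n\gtrsim 1/(\alpha^2\nu)$. Smaller $n$ make the right-hand side exceed $1$, so the claim is trivial there. Since $d_\nu$ is a metric, this yields
\[
\Pr\bigl[\exists g: d_\nu(\mathbb E g,\hat{\mathbb E}_S g)>\alpha\bigr]\le 2\Pr\bigl[\exists g: d_\nu(\hat{\mathbb E}_{S'}g,\hat{\mathbb E}_S g)>\alpha/2\bigr].
\]
Next I condition on the double sample $z_{1:2n}$ and randomize by independent swaps $z_i\leftrightarrow z_{n+i}$. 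For a fixed $g$, the difference of the two empirical means is a sum of independent bounded terms, and its scale is controlled by the total count $\sum_i g(z_i)$, which appears in the denominator of $d_\nu$. A Hoeffding bound applied with that variance proxy gives $\Pr_{\text{swap}}[d_\nu>\alpha/4]\le 2e^{-\alpha^2\nu n/8}$, after tracking constants.

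\emph{Covering.} On the fixed double sample, let $\mathcal C$ be a minimal $\beta$-cover of $\mathcal G|_{z_{1:2n}}$ in empirical $L^1$ on each half, with $\beta=\alpha\nu/8$. If some $g$ has swap-deviation greater than $\alpha/2$, then its cover element has deviation greater than $\alpha/4$, by the perturbation property of $d_\nu$ noted above. Haussler's packing bound, or Pollard's bound for VC classes, gives $|\mathcal C|\le 2\bigl(\frac{2e}{\beta}\ln\frac{2e}{\beta}\bigr)^d=2\bigl(\frac{16e}{\alpha\nu}\ln\frac{16e}{\alpha\nu}\bigr)^d$. A union bound over $\mathcal C$, combined with the swap bound and the symmetrization factor $2$, yields the constant $8$. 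Integrating over the double sample then gives the theorem.

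The main obstacle is the covering step. The cover is additive in $L^1$, while the deviation is relative, so the radius must be scaled by $\nu$, which is exactly where the $\alpha\nu$ inside the logarithmic factor comes from. One must also make sure that the perturbation lemma for $d_\nu$ holds simultaneously on both halves and in the denominator. The per-hypothesis swap inequality, with the precise exponent $\alpha^2\nu n/8$, is the second delicate point. I would take both the $d_\nu$ lemmas and this swap inequality directly from Haussler (1992, Section 4 / Theorem 2) rather than rederiving the constants.
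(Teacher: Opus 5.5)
The paper gives no proof of this statement; it imports the bound from Haussler (1992). Your outline is the standard argument behind that citation: pass to the loss class $\{h\neq f\}$, symmetrize using that $d_\nu$ is a metric, randomize by swaps, and cover at scale $\alpha\nu/8$. The constants come out as you claim:
\begin{itemize}
\item \emph{Ghost sample.} Chebyshev gives $\Pr[d_\nu(\mathbb{E}g,\hat{\mathbb{E}}_{S'}g)>\alpha/2]\le 1/(n\alpha^2\nu)$, which is at most $1/2$ once $n\ge 2/(\alpha^2\nu)$. For smaller $n$ the right-hand side exceeds $1$ if $\alpha\nu<1$, and the event is empty if $\alpha\nu\ge 1$.
\item \emph{Swaps.} Let $a_i=g(z_i)$, $b_i=g(z_{n+i})$ and $T=\sum_i(a_i+b_i)$. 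Hoeffding with ranges $2|a_i-b_i|$, together with $\sum_i(a_i-b_i)^2\le T$ and $(n\nu+T)^2\ge 4n\nu T$, gives $2e^{-2(\alpha/4)^2\nu n}=2e^{-\alpha^2\nu n/8}$.
\item \emph{Total.} The factors $2\cdot 2\cdot 2$ give the constant $8$.
\end{itemize}

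Two small remarks.

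First, besides fixing the sign of the exponent, you silently replaced the denominator $\rho_\D-\rho_S+\nu$ by $\rho_\D+\rho_S+\nu$. This change is necessary, not cosmetic. With the minus sign, the event contains $\{\rho_S-\rho_\D>\alpha\nu/(1+\alpha)\}$. For a single $h$ with $\rho_\D(h,f)=1/2$, its probability decays only like $e^{-\Theta(n\alpha^2\nu^2)}$, so the stated bound would fail as $\nu\to 0$. The multiplicative form of Corollary~\ref{cor:B-hat-B} is exactly what the $+$ version yields at $\alpha=1/3$, $\nu=2\varepsilon$.

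Second, take the cover in $L^1$ of the uniform measure on the whole double sample, rather than ``on each half''. That pseudometric is swap-invariant. Under every swap, the two half-mean perturbations sum to at most $2\cdot\alpha\nu/8$, so $d_\nu$ moves by at most $\alpha/4$. It is for a single such measure that Haussler's packing bound gives exactly $2\bigl(\frac{16e}{\alpha\nu}\ln\frac{16e}{\alpha\nu}\bigr)^d$. Requiring radius $\alpha\nu/8$ on each half separately is not what that bound controls, and the obvious reduction would cost a factor $2$ inside it.
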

Now set $\alpha = 1/3$ and $\nu = 2\varepsilon$ for hypothesis class $\cH \Delta \cH := \{ h \oplus h' \mid h, h' \in \cH\}$, as well as $f := \mathbf 0$. Since $\VCD(\cH \Delta \cH) \leq 2 \VCD(\cH) + 1$ \cite{ben1998combinatorial}, one obtains the following result.

\begin{corollary}
    \label{cor:B-hat-B}
    There exists a constant $M_2 > 1$ such that for all $\delta > 0$, with probability $1 - \delta$ for a sample $S$
    i.i.d. sampled from $\D$ we have (i) for all $h, h' \in \cH$
    $$
    \begin{aligned}
         \rho_{\D}(h, h') & \leq 2  \rho_{S} (h, h') + \varepsilon, \; \text{and} \\
          \rho_{S}(h, h') & \leq 2  \rho_{\D} (h, h') + \varepsilon\,;
    \end{aligned}
    $$
    (ii) for all $r > 0$ and all $f \in \cH$ 
    \begin{equation}
        \begin{aligned} \label{eq:B-hat-B}
         \mathcal B_\D(f, r) & \subseteq \hat{\mathcal{B}}_S \left(f, 2r + \varepsilon\right),  \; \text{and} \\
          \hat{\mathcal B}_{S} (f, r) &\subseteq \mathcal{B}_\D (f, 2r + \varepsilon)\,,
    \end{aligned}
    \end{equation}
    where $\varepsilon =  M_2 \frac{d \ln(2|S| / d) + \ln(1 / \delta)}{|S|}$.
\end{corollary}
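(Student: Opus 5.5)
The plan is to apply the Haussler relative-deviation bound stated just above to the class $\cH \Delta \cH$ with $f = \mathbf 0$, and then turn its multiplicative-plus-additive guarantee into the two-sided inequalities of (i). Part (ii) will follow directly from (i).

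\emph{Reduction to a single class.} For $h, h' \in \cH$, set $g := h \oplus h'$. Then $\rho_\D(h,h') = \rho_\D(g,\mathbf 0)$ and $\rho_S(h,h') = \rho_S(g,\mathbf 0)$. So it suffices to control uniformly, over $g \in \cH\Delta\cH$, the deviation between $\rho_\D(g,\mathbf 0)$ and $\rho_S(g,\mathbf 0)$. By \cite{ben1998combinatorial}, $d' := \VCD(\cH\Delta\cH) \le 2d+1$.

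\emph{Applying the bound.} I will use the relative-deviation form of Haussler's theorem in its standard shape:
\[
|\rho_\D - \rho_S| \le \alpha(\rho_\D + \rho_S + \nu).
\]
The displayed version in the paper has a minus sign, which I read as a typo for the usual sum. I take $\alpha = 1/3$ and $\nu = 2\varepsilon$.

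On the good event, this gives $\rho_\D - \rho_S \le \tfrac13(\rho_\D + \rho_S + 2\varepsilon)$. Rearranging yields $\tfrac23\rho_\D \le \tfrac43\rho_S + \tfrac23\varepsilon$, that is, $\rho_\D \le 2\rho_S + \varepsilon$. The same manipulation with the roles of $\rho_\D$ and $\rho_S$ swapped gives $\rho_S \le 2\rho_\D + \varepsilon$.

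It remains to choose $n = |S|$ so that the failure probability is at most $\delta$. The required condition is
\[
8\left(\tfrac{16e}{\alpha\nu}\ln\tfrac{16e}{\alpha\nu}\right)^{d'} e^{-\alpha^2\nu n/8} \le \delta,
\]
where I read the exponent as negative. Taking logarithms, this becomes
\[
\varepsilon n \gtrsim d'\ln(1/\varepsilon) + \ln(1/\delta).
\]

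\emph{Main obstacle.} The main difficulty is that $\varepsilon$ is defined implicitly in terms of $|S|$, as $\varepsilon = M_2\,(d\ln(2n/d) + \ln(1/\delta))/n$. I must verify that this choice satisfies the inequality above. With this $\varepsilon$, we have $1/\varepsilon \le n/(M_2 d)$, so $\ln(1/\varepsilon) \lesssim \ln(2n/d)$, and the $\ln\ln$ factor is absorbed in the same way. Hence
\[
d'\ln(1/\varepsilon) + \ln(1/\delta) \le C\bigl(d\ln(2n/d) + \ln(1/\delta)\bigr) = C\,\varepsilon n / M_2
\]
for an absolute constant $C$. Choosing $M_2$ larger than the constant this requires completes the verification. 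The remaining care is in tracking constants and handling edge cases such as $n < d$, where $\varepsilon > 1$ and the claims are trivial.

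\emph{Part (ii).} Suppose $h \in \mathcal B_\D(f,r)$, so $\rho_\D(f,h) \le r$. By (i), $\rho_S(f,h) \le 2r + \varepsilon$, hence $h \in \hat{\mathcal B}_S(f, 2r+\varepsilon)$. The reverse inclusion follows symmetrically from the other inequality in (i). Both inclusions hold for all $r$ simultaneously because they come from the single uniform event of (i).
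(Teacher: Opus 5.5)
Your proposal is correct and follows the paper's route exactly: Haussler's relative-deviation bound applied to $\cH\Delta\cH$ with $f=\mathbf 0$, $\alpha=1/3$, $\nu=2\varepsilon$ and $\VCD(\cH\Delta\cH)\le 2d+1$. You also rightly read the displayed bound as $\alpha(\rho_\D+\rho_S+\nu)$ with a negative exponent, and you supply the sample-size check and the derivation of (ii) from (i), both of which the paper leaves implicit. One aside needs fixing: it is not true that $|S|<d$ forces $\varepsilon>1$, because for $|S|\le d/2$ the term $d\ln(2|S|/d)$ is nonpositive, so $\varepsilon$ can be below $1$ or even negative (in which case $h=h'$ already violates (i)); that regime must be excluded by a tacit assumption such as $|S|\ge d$, which the paper also silently needs, rather than dismissed as trivial.
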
 

From this point forward, $M_1$ and $M_2$ refers to the constants given in Theorem~\ref{thm:real-passive-PU} and Corollary~\ref{cor:B-hat-B}, respectively.


\begin{restatable}{theorem}{EstRate} \label{thm:estrate}
    For any $\delta>0$, under Assumption~\ref{asmp:cont}, with probability at least $1-\delta$ we have
\[
\frac{\omega}{2}\ \le\ \operatorname{EstRate}(\cH,\delta)\ \le\ 2\omega .
\]
Moreover, the number of label requests made by $\operatorname{EstRate}$ is
\[
O\!\left(\frac{\theta\bigl(d \ln(\theta / \pi_\D)+\ln(1/\delta)\bigr)}{\pi_D^{2}\,\omega}\right).
\]
\end{restatable}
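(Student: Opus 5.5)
The strategy is to show that, once the while loop of $\operatorname{EstRate}$ has run enough rounds that the passive PU guarantee of Theorem~\ref{thm:real-passive-PU} applies, every point in $P$ is truly positive. Then $r$ is a sum of $|P|$ independent Bernoulli$(\omega)$ variables, and the multiplicative Chernoff bound (Lemma~\ref{lem:mult-chern-bound}) together with the stopping threshold $r\ge 8\ln(8/\delta)$ gives the constant-factor accuracy.

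\textbf{Step 1 (the version space contains $\ell$).} In round $i$, by Remark~\ref{remark:cont-asmp}, $R$ is an i.i.d.\ sample of $\D(\cdot\mid \ell=1)$ of random size $|R|$, and $S$ is an independent unlabeled sample. Since $R$ consists of true positives, $\ell$ survives Line~\ref{line:estrate-V}. I will first try to apply Theorem~\ref{thm:real-passive-PU} to $(R,S)$. If sample sizes differ, I will use $\min(|R|,|S|)$ or pad, which only adjusts constants. This gives $\err_\D(h^*,\ell)\le\gamma'$, where $\gamma'\approx M_1(d\ln+\ln)/|R|$. Corollary~\ref{cor:B-hat-B} then converts this to $\rho_S(h^*,\ell)\le 2\gamma'+M_2(\cdot)/|S|\le 3\gamma$, using $|S|\ge|R|$. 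Hence $\ell\in\hat{\mathcal B}_S(h^*,3\gamma)$, so $\ell$ stays in $V$. Consequently every $x\in P$ satisfies $\ell(x)=1$, because $P$ contains only points labeled $1$ by all of $V$. Conditioned on $P$, the second-round queries on $P$ then return $1$ independently with probability $\omega$ each.

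\textbf{Step 2 (concentration).} Let $|P|=n$. The loop ends at the first round where $r\ge 8\ln(8/\delta)$. I take a union bound over rounds, allotting $\delta 2^{-i}$ to round $i$. The lower Chernoff bound with deviation $1/2$ gives $\Pr[r\le n\omega/2]\le e^{-n\omega/8}$. The upper bound gives $\Pr[r\ge 2n\omega]\le e^{-n\omega/3}$. Both tails are small only when $n\omega\gtrsim\ln(1/\delta)$, and stopping guarantees only that the observed $r$ is large. I handle this with the standard stopping-rule argument. If $n\omega$ were below $4\ln(8/\delta)$, then reaching $r\ge 8\ln(8/\delta)\ge 2n\omega$ is itself an upper-tail event of probability at most $\delta/8$. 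So, with high probability, stopping implies $n\omega\gtrsim\ln(1/\delta)$, and both tails become small. This yields $\omega/2\le r/|P|\le 2\omega$.

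\textbf{Step 3 (label complexity).} I will show the loop terminates once $2^i\approx \theta(d\ln(\theta/\pi_\D)+\ln(1/\delta))/(\pi_\D^2\omega)$. First, $|R|\approx 2^i\pi_\D\omega$ by Chernoff. This makes $\gamma=\tilde O(d/(2^i\pi_\D\omega))$, and with the chosen $2^i$ it is small, of order $\pi_\D/\theta$. Next, every $h\in V$ lies within distance $O(\gamma)$ of $\ell$ in $\rho_\D$, by combining the ball $\hat{\mathcal B}_S(h^*,3\gamma)$, Step 1, and Corollary~\ref{cor:B-hat-B}. Hence $\Delta_\D(V)\le O(\theta\gamma)$. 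The mass of points labeled $1$ by all of $V$ is at least $\pi_\D-\Delta_\D(V)\ge\pi_\D/2$, so $|P|\gtrsim 2^i\pi_\D/2$. Then $r\approx|P|\omega\ge 8\ln(8/\delta)$ holds once $2^i\gtrsim\ln(1/\delta)/(\pi_\D\omega)$. Finally, the total number of queries is a geometric sum, dominated by the last round's $2\cdot 2^i$, which gives the stated bound.

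\textbf{Main obstacle.} I expect Step 1's bookkeeping to be the hardest part. Theorem~\ref{thm:real-passive-PU} is stated for equal, fixed sample sizes, while here $|R|$ is random and adaptively determined. The guarantee must also hold uniformly over all rounds, including early rounds where $|R|$ is tiny. If the guarantee fails in an early round, $P$ may contain negatives and bias $r/|P|$ downward. I would handle this by conditioning on $|R|$, using the union bound with $\delta 2^{-i}$ per round, and noting that when $|R|$ is small, $\gamma$ is vacuously large. In that case $\ell\in V$ still holds trivially, because $\rho_S(h^*,\ell)\le 1\le 3\gamma$.
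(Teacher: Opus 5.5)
Your Step~3 is the paper's halting argument: take the first $i^*$ with $2^{i^*}$ of order $\theta(d\ln(\theta/\pi_\D)+\ln(1/\delta))/(\pi_\D^2\omega)$, use Chernoff for $|R|$, use Theorem~\ref{thm:real-passive-PU} and Corollary~\ref{cor:B-hat-B} to put $V$ in a $\rho_\D$-ball of radius $O(\gamma)$, use the disagreement coefficient to get mass at least $\pi_\D/2$ on which all of $V$ predicts $1$, apply Chernoff for $r$, and sum geometrically.

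For accuracy you take a different and sounder route than the paper. The paper asserts that at stopping $|P|\sim r+\mathrm{NB}(r,\omega)$ and applies Chernoff to $|P|$ given $r$. That describes inverse sampling, whereas in $\operatorname{EstRate}$ the set $P$ is fixed before its queries, $r$ is the random quantity, and the stop is triggered by $r$. Your conditioning on $|P|$, with the stopping-rule argument, handles this correctly. You also rightly require $P\subseteq\{\ell=1\}$ in whatever round the loop stops; the paper checks $\ell\in V$ only at round $i^*$. One fix within Step~2: $|P|\omega\ge 4\ln(8/\delta)$ alone gives a lower tail of only $(\delta/8)^{1/2}$. You must combine $r<|P|\omega/2$ with $r\ge 8\ln(8/\delta)$, which forces $|P|\omega>16\ln(8/\delta)$ and gives $(\delta/8)^2$.

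The step that fails is how you make Steps~1 and~2 uniform over rounds, namely a failure budget of $\delta 2^{-i}$ in round $i$. The algorithm does not supply this. Its $\gamma$ uses the fixed term $\ln(4/\delta)$ and depends on $|R|$, not on $i$, and its stopping threshold $8\ln(8/\delta)$ is fixed.

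\begin{itemize}
\item \textbf{Step~1.} The first round with $\gamma<1/3$ (earlier rounds are vacuous, as you note) has $2^i\approx(d+\ln(1/\delta))/(\pi_\D\omega)$ up to logarithms. There $\delta2^{-i}$ is smaller than $\delta$ by a factor of roughly $\pi_\D\omega$. But the confidence Theorem~\ref{thm:real-passive-PU} gives with the algorithm's $\gamma$ depends only on $\delta$, $d$ and $|R|$, so it cannot be that small when $\pi_\D\omega$ is small.
\item \textbf{Step~2.} In a round with $|P|\omega\approx 4\ln(8/\delta)$, Lemma~\ref{lem:mult-chern-bound} bounds a spurious stop with $r\ge 2|P|\omega$ only by $(\delta/8)^{4/3}$, independent of $i$.
\end{itemize}

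Nothing forces $|P|$ to be large before $\theta\gamma\lesssim\pi_\D$, so roughly $\log_2(\theta/\pi_\D)$ non-vacuous rounds can precede $i^*$. The union bound you can actually carry out therefore gives failure probability growing with that count (e.g.\ of order $\delta\log(\theta/\pi_\D)$), not $\delta$.

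To obtain the statement as written you have two options:
\begin{itemize}
\item Modify the algorithm to use a round-dependent confidence, e.g.\ $\delta/(i+1)^2$ in both $\gamma$ and the threshold, at the cost of a lower-order $\ln i^*$ term.
\item Extract from the proof of the passive PU bound a tail that decays in $|R|$. The $d\ln(2|R|/d)$ term in $\gamma$ gives some slack for Step~1, but nothing analogous exists for the fixed threshold in Step~2.
\end{itemize}
The paper's proof does not resolve this either.

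Separately, Theorem~\ref{thm:real-passive-PU} is stated for $|S^U|=|S^P|$. Here $|S|=2^i$ typically far exceeds $|R|$, and neither padding $R$ nor subsampling $S$ reproduces the algorithm's $h^*$. So you need a version of the passive PU bound for a larger unlabeled sample, not merely a change of constants.
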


The proof of Theorem~\ref{thm:estrate} is deferred to the Appendix.

\begin{lemma} \label{lem:u-binary-search}
   
    Fix an iteration $i$ in Algorithm~\ref{alg:real-active-pu-unknown-pi}. Suppose $\ell \in V_i$, $u_i - b_i \geq \frac{\varepsilon}{6}$ and $\hat{\pi}_{S_1} \leq \frac{b_i + u_i}{2}$.
    Then with probability at least $1 - \frac{\delta}{8N}$ we have
    $$
    \Pr(\ell(x) = 1 \mid \DIS(V_i)) < \frac{1}{32 \theta}.
    $$
\end{lemma}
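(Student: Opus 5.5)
The plan is to compare two masses: the positive mass inside $\DIS(V_i)$, which is what the lemma controls, and the total mass $\Delta_\D(V_i)$. I will bound the first from above using the hypothesis $\hat\pi_{S_1}\le\frac{b_i+u_i}{2}$, bound the second from below using the fact that $V_i$ was carved out by the constraint $\hat{\mathcal B}_{S_1}(\mathbf 0,u_i)$, and take the ratio. The only randomness is in $S_1$. I will invoke Corollary~\ref{cor:B-hat-B}, together with a multiplicative Chernoff bound (Lemma~\ref{lem:mult-chern-bound}) for the single events $\{\ell=1\}$ and $\{\forall h\in V_i:h(x)=1\}$, at confidence $\frac{\delta}{8N}$. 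This gives a slack $\varepsilon_0=O(\varepsilon)$, matching the size $|S_1|=\Theta\bigl((d\ln(1/\varepsilon)+\ln(N/\delta))/\varepsilon\bigr)$; the constants $3M_2+96$ will be used to push $\varepsilon_0$ below a small fraction of $\varepsilon/6$.

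\textbf{Step 1 (numerator).} Let $b:=\Pr[\forall h\in V_i:h(x)=1]$. Exactly as in \eqref{eq:pos-rate} of Lemma~\ref{lem:pos-rate-known}, using $\ell\in V_i$,
\[
\Pr[\ell(x)=1,\,x\in\DIS(V_i)]=\pi_\D-b .
\]
On the good event for $S_1$, $\pi_\D\le\hat\pi_{S_1}+\varepsilon_0$ and $b\ge b_i-\varepsilon_0$. Combining with the hypothesis gives
\[
\Pr[\ell(x)=1,\,x\in\DIS(V_i)]\le\frac{u_i-b_i}{2}+2\varepsilon_0 .
\]

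\textbf{Step 2 (denominator).} I need $\Delta_\D(V_i)$ to exceed this numerator by a factor of about $32\theta$. The plan is to maintain, by induction over the iterations, a witness $h\in V_i$ whose empirical positive rate on $S_1$ is close to $u_i$. Such an $h$ predicts positive on $S_1$-mass at least $u_i-b_i-O(\varepsilon_0)$ outside the common-positive set, so $\Delta_\D(V_i)\gtrsim u_i-b_i$ after transferring from $S_1$ to $\D$ via Corollary~\ref{cor:B-hat-B}. I would then try to sharpen the gap with the disagreement coefficient. The idea is to place the relevant subset of $V_i$ inside a ball $\mathcal B_\D(\ell,r)$ with $r$ comparable to $\pi_\D-b$, so that the positive part of $\DIS(V_i)$ is small relative to the full disagreement region.

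\textbf{Step 3 (conclusion).} Divide the bound of Step 1 by the bound of Step 2. Absorb $\varepsilon_0$ using $u_i-b_i\ge\varepsilon/6$, and union-bound the concentration events to obtain the failure probability $\frac{\delta}{8N}$.

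\textbf{Main obstacle.} The hard step is Step 2. Steps 1 and 3 give a ratio of order $\frac12$ essentially for free, but reaching $\frac{1}{32\theta}$ needs a lower bound on $\Delta_\D(V_i)$ that is about $16\theta$ times larger than $u_i-b_i$. The disagreement coefficient naturally supplies upper bounds on $\Delta_\D$, as it does in Lemma~\ref{lem:pos-rate-known}, not lower bounds. It is also unclear whether $V_i$ retains hypotheses with rate near $u_i$ after the $R_1$- and $R_2$-pruning steps. I would first try to extract the needed factor from the structure of the algorithm's updates, for instance from the pruning at line~\ref{line:active-PU-V-update}. If that fails, I would check whether the statement is really meant in the form its use requires, namely the contrapositive that underlies the test at line~\ref{line:active-PU-if-u-small}, and adjust the argument to that form.
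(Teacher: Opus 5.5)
Your Step~2 cannot be completed, and the obstruction is not technical: the implication as printed is false. Take $\X=[0,1]$ with $\D$ uniform, $\cH=\{h_t:t\in[0,1]\}$ where $h_t(x)=1$ iff $x\le t$, and $\ell=h_{1/4}$; here $\theta=2$. At $i=0$ we have $V_0=\cH$, $u_0=1$, $b_0=0$ almost surely, and $\DIS(V_0)=(0,1]$. With overwhelming probability $\hat\pi_{S_1}\le\frac12=\frac{u_0+b_0}{2}$, so every hypothesis of the lemma holds. Yet $\Pr(\ell(x)=1\mid\DIS(V_0))=\frac14>\frac1{64}=\frac1{32\theta}$, and this holds deterministically.

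More structurally, on the good event for $S_1$ the constraint $V_i\subseteq\hat{\mathcal B}_{S_1}(\mathbf 0,u_i)$ forces $\Delta_\D(V_i)\le8\theta(u_i-b_i)$. That contradicts the very target of your Step~2, $\Delta_\D(V_i)\gtrsim16\theta(u_i-b_i)$. As you suspected, $\theta$ only bounds $\Delta_\D$ from above. Your idea of trapping $V_i$ in a ball $\mathcal B_\D(\ell,r)$ with $r\approx\pi_\D-b$ makes the positive fraction of $\DIS(V_i)$ large, exactly as in Lemma~\ref{lem:pos-rate-known}, not small.

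The paper's proof actually establishes the converse; its second sentence uses $\hat\pi_{S_1}-b_i\ge\frac{u_i-b_i}{2}$, contrary to the printed hypothesis. What it shows is: if $\hat\pi_{S_1}\ge\frac{u_i+b_i}{2}$, then with high probability $\Pr(\ell(x)=1\mid\DIS(V_i))\ge\frac1{32\theta}$. This is the form Claim~2 of Theorem~\ref{thm:active-PU-unkown} needs: a low feedback rate then forces $\hat\pi_{S_1}<\frac{u_i+b_i}{2}$, so $\ell$ survives the cut to $\hat{\mathcal B}_{S_1}(\mathbf 0,u_{i+1})$. Its proof runs both of your steps in the opposite direction.

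\emph{Numerator, bounded from below.} The $S_1$-empirical mass of $\DIS(V_i)\cap\{\ell=1\}$ is exactly $\hat\pi_{S_1}-b_i\ge\frac{u_i-b_i}{2}\ge\frac{\varepsilon}{12}$. The multiplicative Chernoff bound with $|S_1|=\Omega(\ln(N/\delta)/\varepsilon)$ then gives true mass at least $\frac{\hat\pi_{S_1}-b_i}{2}$.

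\emph{Denominator, bounded from above.} Let $f_i$ be the common-positive indicator of $V_i$. Then $V_i\subseteq\hat{\mathcal B}_{S_1}(f_i,u_i-b_i)\subseteq\hat{\mathcal B}_{S_1}(\ell,2(u_i-b_i))\subseteq\mathcal B_\D(\ell,8(u_i-b_i))$. The last inclusion is Corollary~\ref{cor:B-hat-B}(ii), with slack at most $\frac{2\varepsilon}{3}\le4(u_i-b_i)$. Hence $\Delta_\D(V_i)\le8\theta(u_i-b_i)\le16\theta(\hat\pi_{S_1}-b_i)$, and the ratio is at least $\frac1{32\theta}$. No witness needs to be maintained.

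Your closing remark points in this direction, but the usable statement is the converse of the printed implication, with both inequalities reversed. The printed implication and its contrapositive are equivalent and equally false. As written, your proposal proves neither version.
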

\begin{proof}
    Since $\ell \in V_i$, by definition of $b_i$ we have
    \begin{equation*} 
        \hat{\Pr}_{S_1} (x \in \DIS(V_i) \text{ and } \ell(x) = 1) = (\hat{\pi}_{S_1} - b_i)\,.
    \end{equation*}
    Since 
    $$\hat{\pi}_{S_1} - b_i \geq \frac{u_i - b_i}{2} \geq \frac{\varepsilon}{12},$$
    applying the multiplicative Chernoff bound (Lemma~\ref{lem:mult-chern-bound}), we obtain
     \begin{equation} \label{eq:cut-in-half-1}
    \begin{aligned}
        \Pr \left[\Pr(x \in \DIS(V_i) \text{ and } \ell(x) = 1) <  \frac{(\hat{\pi}_{S_1} - b_i)}{2}\right] \\ < \exp \left(\frac{-|S_1| \varepsilon}{12 * 8} \right) \leq \frac{\delta}{16N}
    \end{aligned}
    \end{equation}

    Denote $f_i(x) := \mathbbm{1}\{ \forall h' \in V_i: h'(x) = 1\}$. Then by definition of $V_i$ we have $V_i \subseteq \hat{\mathcal{B}}_{S_1}(\mathbf{0}, u_i)$. Since $b_i$ is the ratio of instances in $S_1$ where all of $V_i$ have label 1, we have 
    $$V_i \subseteq \hat{\mathcal{B}}_{S_1}(f_i, u_i - b_i) \subseteq \hat{\mathcal{B}}_{S_1}(\ell, 2(u_i - b_i)).$$

    Note that $S_1$ satisfies
    $$
    M_2 \frac{d \ln(2|S_1| / d) + \ln(16N / \delta)}{|S_1|} \leq \frac{2\varepsilon}{3}\,.
    $$
    Thus, according to Corollary~\ref{cor:B-hat-B} (ii), with probability $1 - \delta / 2$, we have \begin{equation} \label{eq:cut-in-half-2}
        \hat{\mathcal{B}}_{S_1}(\ell, 2(u_i - b_i)) \subseteq \mathcal{B}_\D(\ell, 8(u_i - b_i))\,.
    \end{equation}
    Thus,
    \begin{equation*} 
    \begin{aligned}
        \Delta_\D(V_i) & \leq \Delta_\D\left(\mathcal B_\D (\ell, 8(u_i - b_i))\right) \\
        & \leq  8 \theta (u_i - b_i) \leq 16 \theta (\hat{\pi}_{S_1} - b_i)\,.
    \end{aligned}
    \end{equation*}
    
    Combining this with \eqref{eq:cut-in-half-1} completes the proof.
\end{proof}

\begin{theorem} \label{thm:active-PU-unkown}
    There exist a constant $M_3 > 1$ such that for any
$(\varepsilon, \delta) \in (0,1] \times (0,1]$, under
Assumption~\ref{asmp:cont}, given $\cH$, $\varepsilon$, and $\delta$,
Algorithm~\ref{alg:real-active-pu-unknown-pi} 
makes 
\[
\begin{aligned}
O\Bigg(
\frac{\ln (1 / \varepsilon)\,\theta^2\,
\big(d \ln(\theta) + \ln \ln(1 / \varepsilon) + \ln(1 / \delta)\big)}{\omega} \\
  + \frac{\theta\big(d \ln(\theta / \pi_\D) + \ln(1 / \delta)\big)}{\pi_{\D}^2\,\omega}
\Bigg)
\end{aligned}
\]
label requests 
and then 
outputs a hypothesis
$\hat h$ which, with probability at least $1-\delta$, satisfies
\(
\err_{\D}(\hat h, \ell) \leq \varepsilon
\).


\end{theorem}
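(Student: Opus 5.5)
We prove the theorem by conditioning on a small number of good events. We then bound the number of while-loop iterations of each of the two types, and finally bound the label cost of each type. The good events are:
(a) Theorem~\ref{thm:estrate} applied with confidence $\delta/8$, so that $\omega/2\le\hat\omega\le 2\omega$;
(b) Corollary~\ref{cor:B-hat-B} holds for $S_1$ (its size is chosen so that the additive slack is at most $2\varepsilon/3$) and for every $S_2$;
(c) Lemma~\ref{lem:u-binary-search} holds in every iteration;
(d) in every iteration, a multiplicative Chernoff bound (Lemma~\ref{lem:mult-chern-bound}) controls $|R_1|/\lambda_1$ and $|R_2|$.
Each of these holds with failure probability $O(\delta/N)$ per iteration, so a union bound over at most $N$ iterations gives total failure probability at most $\delta$. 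The cost of EstRate contributes the second term of the bound directly, via Theorem~\ref{thm:estrate}.

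\textbf{Invariant: $\ell$ stays in $V_i$.} Feedback-$1$ points are true positives, so $\ell$ survives every positive-consistency filter. It survives the cut $\hat{\mathcal B}_{S_1}(\mathbf 0,u_{i+1})$ because an upper update happens only when the feedback rate is small. Under (a) and (d), that means $\Pr(\ell=1\mid\DIS(V_i))<1/(32\theta)$. By the contrapositive of Lemma~\ref{lem:u-binary-search}, this forces $\hat\pi_{S_1}<(u_i+b_i)/2=u_{i+1}$, so $\ell\in\hat{\mathcal B}_{S_1}(\mathbf 0,u_{i+1})$. The lemma applies because the loop condition gives $u_i-b_i>\varepsilon/6$. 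For the cut $\hat{\mathcal B}_{S_2}(h_{i+1},\gamma_{i+1})$, we argue as follows. $R_2$ is an i.i.d.\ sample from $\D(\cdot\mid\DIS^P(V_i))$ by Remark~\ref{remark:cont-asmp}, and $S_2$ is i.i.d.\ from $\D(\cdot\mid\DIS(V_i))$. Theorem~\ref{thm:real-passive-PU}, applied to the conditional distribution on $\DIS(V_i)$, gives $\rho_{S_2}(h_{i+1},\ell)\le\gamma_{i+1}$, and Corollary~\ref{cor:B-hat-B} converts between empirical and true distances.

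\textbf{Counting iterations.} Each upper update halves $u_i-b_i$, since $b_{i+1}\ge b_i$ and $u_{i+1}-b_i=(u_i-b_i)/2$. Hence there are at most $\log_2(6/\varepsilon)$ such updates before the loop exits. In a large-rate iteration, the Chernoff bound (d) and $\hat\omega\le2\omega$ give $\Pr(\ell=1\mid\DIS(V_i))=\Omega(1/\theta)$. By (d), $|R_2|=\Omega(\lambda_2\omega/\theta)=\Omega(\theta(d\ln\theta+\ln(N/\delta)))$, so $\gamma_{i+1}\le c/\theta$ for a small constant $c$ once $M_3$ is large. We then mimic Claim 1 of Theorem~\ref{thm:active-pu-known-pi}. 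Every $h\in V_{i+1}$ is within $O(\gamma_{i+1})$ of $\ell$ in $\rho_{\D(\cdot\mid\DIS(V_i))}$. Since $h$ and $\ell$ agree outside $\DIS(V_i)$, this gives $\rho_\D(h,\ell)\le \Delta_\D(V_i)/(2\theta)$. The definition of $\theta$ then yields $\Delta_\D(V_{i+1})\le\Delta_\D(V_i)/2$. Because $\hat\Delta_{S_1}(V_i)>\varepsilon/3$ implies $\Delta_\D(V_i)=\Omega(\varepsilon)$ via (b), there are at most $\log_2(6/\varepsilon)$ halvings. So the loop runs at most $N$ times.

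\textbf{Correctness at exit and label count.} If the loop exits because $\hat\Delta_{S_1}(V_i)\le\varepsilon/3$, then (b) gives $\Delta_\D(V_i)\le\varepsilon$, and any $h_i\in V_i$ has error at most $\varepsilon$. If it exits because $u_i-b_i\le\varepsilon/6$, then $V_i\subseteq\hat{\mathcal B}_{S_1}(\ell,2(u_i-b_i))$ as in the proof of Lemma~\ref{lem:u-binary-search}, and Corollary~\ref{cor:B-hat-B} converts this to $\err_\D\le\varepsilon$. The labels used are at most $N(\lambda_1+\lambda_2)=O\bigl(\ln(1/\varepsilon)\theta^2(d\ln\theta+\ln\ln(1/\varepsilon)+\ln(1/\delta))/\omega\bigr)$, using $\hat\omega\ge\omega/2$. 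Adding the EstRate cost gives the stated bound.

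\textbf{Main obstacle.} The delicate step is the large-rate halving argument. It requires matching the constants in $\gamma_{i+1}$ and in the $3\gamma$-type ball cuts with Theorem~\ref{thm:real-passive-PU} and Corollary~\ref{cor:B-hat-B}, all under the conditional distribution on $\DIS(V_i)$, so that the error is at most $\Delta_\D(V_i)/(2\theta)$. I would first choose $M_3$ large enough to absorb all of these constants. I would then verify that the conditional PU theorem applies: its positive sample $R_2$ and unlabeled sample $S_2$ have equal size, as the theorem requires.
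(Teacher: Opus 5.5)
Your outline follows the paper's proof essentially step for step. It uses the same invariant $\ell\in V_i$: positive-consistency, Lemma~\ref{lem:u-binary-search} for the $u$-cut, and Theorem~\ref{thm:real-passive-PU} with Corollary~\ref{cor:B-hat-B} on $\D(\cdot\mid\DIS(V_i))$ for the $\gamma_{i+1}$-cut. It also has the same two halving counts and the same $N(\lambda_1+\lambda_2)$-plus-EstRate label count. You also correctly use Lemma~\ref{lem:u-binary-search} in the direction its proof actually establishes. The one step that fails as justified is your use of (b) to pass between $\hat\Delta_{S_1}(V_i)$ and $\Delta_\D(V_i)$, which you do twice.

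Corollary~\ref{cor:B-hat-B} controls pairwise distances $\rho(h,h')$ uniformly over $h,h'\in\cH$. However, $\DIS(V_i)=\bigcup_{h\in V_i}\{x: h(x)\neq\ell(x)\}$ is a union over the whole version space. Every member of $V_i$ can be close to $\ell$ while the union is large, since each $h$ may disagree on a different small piece. So (b) yields neither ``$\hat\Delta_{S_1}(V_i)\le\varepsilon/3\Rightarrow\Delta_\D(V_i)\le\varepsilon$'' nor ``$\hat\Delta_{S_1}(V_i)>\varepsilon/3\Rightarrow\Delta_\D(V_i)=\Omega(\varepsilon)$''.

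\emph{At exit.} You only need a single pair, which is what the paper does. Since $h_i,\ell\in V_i$, you get $\rho_{S_1}(h_i,\ell)\le\min\bigl(2(u_i-b_i),\hat\Delta_{S_1}(V_i)\bigr)\le\varepsilon/3$, and (b) is then applied to this one pair. The additive slack must be $\varepsilon/3$, as the paper arranges; with your stated slack of $2\varepsilon/3$ you only get $4\varepsilon/3$.

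\emph{For the iteration count.} You need concentration for the set $\DIS(V_i)$ itself, and the paper invokes a multiplicative Chernoff bound on $S_1$ here. But $V_i$ depends on $S_1$ through the $u$-cuts and $b_i$. A cleaner rigorous route is Chernoff plus a union bound over the deterministic sets $D_k:=\DIS(\mathcal B_\D(\ell,2^{-k}/\theta))$, $k\le N$:
\begin{itemize}
\item Your own halving argument shows that after $k$ \texttt{else}-rounds, $V_i\subseteq\mathcal B_\D(\ell,2^{-k}/\theta)$.
\item Hence $\DIS(V_i)\subseteq D_k$, and $\Delta_\D(\mathcal B_\D(\ell,2^{-k}/\theta))\le 2^{-k}$.
\item Once $2^{-k}\le\varepsilon/12$, this gives $\hat\Delta_{S_1}(V_i)\le\varepsilon/3$ with high probability, so the loop exits after $O(\log(1/\varepsilon))$ such rounds.
\end{itemize}

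A minor slip: inferring $\Pr(\ell(x)=1\mid\DIS(V_i))=\Omega(1/\theta)$ from entering the \texttt{else} branch uses $\hat\omega\ge\omega/2$, not $\hat\omega\le 2\omega$.
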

\begin{proof}
    Due to Theorem~\ref{thm:estrate}, we can assume that $\hat \omega \notin \left[\frac{\omega}{2}, 2 \omega \right]$ with probability $1 - \delta / 8$. 
    For each iteration $i$,
    let $\D_i$ denote the distribution $\D$ conditioned on the disagreement region $\DIS(V_i)$, that is,
    \[
    \forall A \subseteq \X:\quad \D_i(A) := \D(x \in A \mid x \in \DIS(V_i)),
    \]
    and denote $p_i = \D_i(\ell(x) = 1)$. 
    We begin by establishing the following claims.

    \paragraph{Claim 1:} With probability at least $1 - \delta / 8$, for every iteration $i = 0, \ldots ,N - 1$:  \\
    (i) If $p_i \geq \frac{1}{32 \theta}$ , the algorithm enters the \texttt{if} statement in Line~\ref{line:active-PU-if-u-small}. \\
    (ii) If $p_i \leq \frac{1}{512 \theta}$ the algorithm does not enter the \texttt{if} Line~\ref{line:active-PU-if-u-small}.   \\
    \textbf{Proof.} 
    (i) Note that $\mathbb{E} \left[\frac{|R_1|}{\lambda_1}\right] = \omega p_i \geq \frac{\hat \omega }{64 \theta}$. Thus, using the multiplicative Chernoff bound (Lemma~\ref{lem:mult-chern-bound}), the probability of the algorithm not entering the \texttt{if} is at most
    \begin{equation*} 
        \Pr\left[ \frac{|R_1|}{\lambda_1} < \frac{\hat \omega}{128 \theta} \right] \leq \exp \left( - \frac{\lambda_1 \hat \omega }{8 * 64 \theta} \right) < \frac{\delta}{8N}.
    \end{equation*}
    
    (ii) Note that $\mathbb{E} \left[\frac{|R_1|}{\lambda_1} \right] = \omega p_i \geq \frac{\hat \omega }{256 \theta}$. Thus, the probability of the algorithm not entering the \texttt{if} is at most
    \begin{equation*} 
        \Pr\left[ \frac{|R_1|}{\lambda_1} <  \frac{\hat \omega}{128 \theta} \right] \leq \exp \left( - \frac{\lambda_1  \hat \omega }{3 * 256 \theta} \right) < \frac{\delta}{8N}.
    \end{equation*}
    Taking a union bound over all iterations $i = 0, \ldots, N-1$, completes the proof of Claim 1. 

   Note that by Corollary~\ref{cor:B-hat-B}(i), for every iteration $i = 0, \ldots, N-1$, with probability at least $1 - \delta/8$, for all $h, h' \in V_i$, it holds that
    \begin{equation} \label{eq:active-PU-0}
        \begin{aligned}
         \rho_{\D_i}(h, h') & \leq 2  \rho_{S_2} (h, h') + \varepsilon, \; \text{and} \\
          \rho_{S_2}(h, h') & \leq 2  \rho_{\D_i} (h, h') + \varepsilon\,,
    \end{aligned}
    \end{equation}
    where $\varepsilon =  M_2 \frac{d \ln(2|S_2| / d) + \ln(8 / \delta)}{|S_2|}$. 

    \paragraph{Claim 2:} With probability at least $1 - \delta/4$, for iteration $i = 0, \ldots, N - 1$, we have $\ell \in V_{i + 1}$. \\[0.3em]
\textbf{Proof.}  
It suffices to show that, with high probability, whenever $V_i$ is updated to $V_{i + 1}$, $\ell$ remains in $V_{i + 1}$. Note that $V_i$ is only updated in Lines~\ref{line:active-PU-V-R-update}, \ref{line:active-PU-V-update}, and \ref{line:active-PU-V-cap-B}. It is immediate that $\ell$ remains in $V_{i + 1}$ when the update occurs in Line~\ref{line:active-PU-V-R-update}. Thus, we only need to analyze the updates in the other two lines.

\begin{itemize}
    \item \textbf{Update in Line~\ref{line:active-PU-V-update}:}  
    Note that due to Remark~\ref{remark:cont-asmp},
    $R_2$ is an i.i.d. sample from $\DIS^P(V_i)$. Note that $h_{i+1}$ in Line~\ref{line:active-PU-h-update} corresponds to $h^{PU}$ in Theorem~\ref{thm:real-passive-PU}, where the underlying distribution is $\D_i$, the positive labeled sample is $R_2$, and the unlabeled sample is $S_2$.
    By Theorem~\ref{thm:real-passive-PU}, with probability $1 - \frac{\delta}{8N}$ we obtain
    \begin{equation} \label{eq:active-PU-claim-2-i}
    \begin{aligned}
        \err_{\D_i}(h_{i+1}, \ell) & \leq M_1 \frac{d \ln(2 |R_2| / d) + \ln(8 N / \delta)}{|R_2|} \\
    \end{aligned}
    \end{equation}
    Due to \eqref{eq:active-PU-0}, since $|S_2| = |R_2|$ we derive
    \[
    \begin{aligned}
        \rho_{S_2} (\ell, h_{i+ 1}) & \leq 2 \err_{\D_i}(h_{i + 1}, \ell) 
        \\ & \quad
         + M_2 \frac{d \ln(2|S_2| / d) 
         + \ln(8 / \delta)}{|S_2|}. \\
        & \leq  (2M_1 + M_2) \frac{d \ln(2|R_2| / d) + \ln(8N / \delta)}{|R_2|}.
        \end{aligned}
    \]
    Consequently, $\ell$ remains in $V_{i+1}$ during the update in Line~\ref{line:active-PU-V-update} with probability $1 - \frac{\delta}{8N}$.

    \item \textbf{Update in Line~\ref{line:active-PU-V-cap-B}:}  
    Since $\hat{\Pr}_{S_1}[\ell(x) = 1] = \hat{\pi}_{S_1}$, it suffices to show that, whenever $\hat{\pi}_{S_1} \leq \tfrac{u_i + b_i}{2}$, the algorithm does not enter the \texttt{if} in Line~\ref{line:active-PU-if-u-small}.  
    Suppose $\hat{\pi}_{S_1} - b_i \leq 2(u_i - b_i)$. 
    Since the algorithm entered the \texttt{while} loop, we have $u_i - b_i \geq \varepsilon / 4$. 
    Thus, by Lemma~\ref{lem:u-binary-search}, with probability $1 - \frac{\delta}{8N}$ we have $p_i \geq \tfrac{1}{32\theta}$. By Claim~1(i), this implies that the algorithm does not enter the \texttt{if} in Line~\ref{line:active-PU-if-u-small}. Therefore, $\ell$ remains in $V_{i + 1}$ with probability at least $1 - \frac{\delta}{8 N}$.
\end{itemize}

Each of the above updates can occur at most $N$ times. Taking a union bound over all events completes the proof of Claim~2. 


 \paragraph{Claim 3:} For any iteration $i \in \{ 0, \ldots, N\}$, if the update in Line~\ref{line:active-PU-V-update} happens with probability $1 - \frac{\delta}{8N}$, we have
\[
    \Delta_\D(V_{i+1}) \leq \tfrac{1}{2}\Delta_\D(V_{i}).
\]

\textbf{Proof.}  
By definition of $V_{i + 1}$, for every $h \in V_{i + 1}$ we have
\[
\begin{aligned}
    \rho_{S_2}(h_{i + 1}, h) & \leq (2M_1 + M_2)  \cdot \frac{d \ln(2|R_2| / d) + \ln(8N / \delta)}{|R_2|} \\
\end{aligned}
\]
Due to \eqref{eq:active-PU-0}
\[
\begin{aligned}
    & \rho_{\D_i}(h_{i + 1}, h) \leq (4M_1 + 3M_2) \frac{d \ln(2 |R_2| / d)  + \ln(8N/\delta)}{|R_2|}.
\end{aligned}
\]

Assume the above equation holds. Combining this with \eqref{eq:active-PU-claim-2-i} from Claim~2, we conclude that for all $h \in V_{i + 1}$,
\begin{equation} \label{eq:active-PU-claim-3-1}
    \begin{aligned}
    & \err_{\D_i} (h, \ell) \\
    & \quad \leq (5M_1 + 3M_2)  \frac{d \ln(2 |R_2| / d)  + \ln(8N/\delta)}{|R_2|}\,.
\end{aligned}
\end{equation}

    Next, notice that the algorithm only reaches Line~\ref{line:active-PU-V-update} when it entered the \texttt{else} in Line~\ref{line:active-PU-else}. Therefore, by Claim~1(ii), we have $p_i \geq \frac{1}{512 \theta}.$ Define $$\lambda_2' := \frac{\lambda_2 \omega}{1024 \theta}\,.$$ 
    By definition of $\lambda_2$, for a constant $M_3$ we have $\lambda'_2 \geq 4 \ln \left(\tfrac{8N}{\delta}\right)$. Due to the multiplicative Chernoff Bound (Lemma~\ref{lem:mult-chern-bound}), we derive
    \begin{equation} \label{eq:active-PU-1}
        \Pr[\,|R_2| \leq \lambda'_2\,] \leq \exp\!\left(-\tfrac{\lambda'_2}{4}\right) \leq \tfrac{\delta}{8N}.
    \end{equation}
    Assuming $|R_2| > \lambda'_2$, by the definition of $\lambda_2$ there exists a constant $M_3$ such that
    $$
    (5M_1 + 3M_2)  \frac{d \ln(2 |R_2| / d)  + \ln(8N/\delta)}{|R_2|} \leq \frac{1}{2 \theta}\,.
    $$
Hence, by \eqref{eq:active-PU-claim-3-1}
\[
    V_{i + 1} \subseteq \mathcal{B}_\D\!\left(\ell, \frac{\Delta_\D(V_i)}{2\theta}\right).
\]
This implies
\[
    \Delta_\D(V_{i+1}) \leq \Delta_\D\!\left(\mathcal{B}_\D\!\left(\ell, \tfrac{\Delta_\D(V_i)}{2\theta}\right)\right)
    \; \leq \; \tfrac{1}{2}\Delta_\D(V_i).
\]
By taking a union bound over all iterations $i = 0, \ldots, N-1$, Claim~3 holds for every $i \in \{0, \ldots, N-1\}$. 

We now complete the proof. Observe that $u_i - b_i$ is a decreasing function of $i$: each time the algorithm enters the \texttt{if} statement in Line~\ref{line:active-PU-if-u-small}, the value of $u_i - b_i$ is halved. Since, prior to termination, $u_i - b_i$ always remains greater than $\varepsilon/6$, the algorithm can enter this \texttt{if} branch at most $\log_2(6/\varepsilon)$ times.

Similarly, observe that prior to termination, $\hat{\Delta}_{\hat{S}_1}(V_i)$ remains greater than $\varepsilon/3$. Since
\[
|S_1| \geq \frac{24 \ln(8 N / \delta)}{\varepsilon},
\]
by the multiplicative Chernoff bound (Lemma~\ref{lem:mult-chern-bound}), with probability at least $1 - \frac{\delta}{8N}$, it holds that
\begin{equation*} 
    \Delta_{\D}(V_i) \geq \frac{\varepsilon}{6}\,.
\end{equation*}
Applying a union bound for all iterations $i = 0, 1, ..., N$ such that the algorithm enters the \texttt{while} loop, with probability $1- \delta / 8$ we get that $\Delta_{\D}(V_i)$ remains greater than $\frac{\varepsilon}{6}$ for all $i \in \{0, \ldots, N-1\}$. Observe that each time the algorithm enters the \texttt{else} statement in Line~\ref{line:active-PU-else}, $\Delta_\D(V_i)$ is divided by two. Thus, with probability at least $1 - \delta / 8$, the number of times the algorithm enters the \texttt{else} in Line~\ref{line:active-PU-else} is at most $\log_2(6 / \varepsilon)$.

Hence, the algorithm halts in $N = 2\log_2(6 / \varepsilon)$ iterations. Suppose the algorithm halts at iteration $n$. Then, the empirical error of its output is bounded from above by
\begin{equation*} 
    \hat \err_{S_1}(h_{n}, \ell) \leq \min\!\big(2(u_{n} - b_{n}), \hat \Delta_{S_1}(V_{n})\big) \leq \varepsilon / 3\,.
\end{equation*}
Finally, note that $S_1$ satisfies
$$
M_2 \frac{d \ln(2|S_1| / d) + \ln(8 / \delta)}{|S_1|} \leq \frac{\varepsilon}{3}\,.
$$
Thus, by Corollary~\ref{cor:B-hat-B}(i), with probability at least $1 - \delta/8$, we have
\[
    \err_\D(h_{n}, \ell) \leq 2 \hat \err_{S_1}(h_{n}, \ell) + \varepsilon / 3 \leq \varepsilon.
\]
Taking a union bound over events assumed true in this argument establishes that the learner outputs a low-error hypothesis with high probability, as claimed in the theorem. It remains to bound the number of queries made by the learner.

The number of queries makes for estimating $\hat \omega$ is at most
\[
O\!\left(\frac{\theta\bigl(d \ln(\theta / \pi_\D)+\ln(1/\delta)\bigr)}{\pi_{\D}^{2}\,\omega}\right).
\]
Afterwards, in each iteration the learner makes at most $\lambda_1 + \lambda_2$ queries, thus making $N(\lambda_1 + \lambda_2)$ queries in total. This completes the proof.

\end{proof}

\section{Conclusion and Open Problems}
\label{sec:conclusion}

This paper provides the first formal label complexity analysis for active learning from positive and unlabeled data. Our main result establishes that the label complexity of active PU learning can be bounded by
\[
\begin{aligned}
O\Bigg(
\frac{\ln (1 / \varepsilon)\,\theta^2 \,
\big(d \ln(\theta) + \ln \ln(1 / \varepsilon) + \ln(1 / \delta)\big)}{\omega}
\;+\; \\
\frac{\theta\big(d \ln(\theta / \pi_\D) + \ln(1 / \delta)\big)}{\pi_{\D}^2\,\omega}
\Bigg).
\end{aligned}
\]
Ignoring the second term, which is independent of $\varepsilon$, this bound differs from the classical active learning label complexity bound of \cite{hanneke2009theoretical} for the CAL algorithm,
\[
\ln (1 / \varepsilon)\,\theta \,
\big(d \ln(\theta) + \ln \ln(1 / \varepsilon) + \ln(1 / \delta)\big),
\]
by only a factor of $\theta / \omega$. The dependence on $\omega$ in our bound is inherent to the PU setting, since the effective rate of obtaining labels is always proportional to $\omega$. Whether the $\theta^2$ dependence in our active PU learning bound is fundamental, or can be reduced to $\theta$, remains an open question.

The additional $\theta$ factor in our bound stems from the fact that, in our analysis, the rate at which positive instances are queried is at most $1/\theta$. A similar phenomenon is already observed in passive PU learning. In particular, \citet{mansouri2025learning} established that the number of positive examples required for learning is bounded from below by
\[
\Omega\!\left(\frac{d + \ln(1/\delta)}{\varepsilon}\right).
\]
However, collecting this many random positive examples requires querying as many as
\[
\Omega\!\left(\frac{d + \ln(1/\delta)}{\pi_\D \varepsilon}\right)
\]
random pool examples, incurring an unavoidable $1/\pi_\D$ overhead in terms of sample complexity compared to fully supervised passive learning. 

A second direction for future work is the study of active PU learning under discrete distributions. In such settings, a fundamental difficulty arises: if querying an instance with large probability mass yields no feedback, then the learner may no longer be able to extract additional information from that instance, rendering learning impossible.
One possible approach to address this issue is to allow the learner, at test time, to refrain from labeling instances that received the feedback $\star$ during training. However, this introduces a new challenge, as the i.i.d.\ property on the selected instances (see Remark~\ref{remark:cont-asmp}) would no longer hold.

A third problem left for future research is to analyze the label complexity of active PU learning in the general agnostic setting.

\nocite{langley00}

\bibliography{ref}
\bibliographystyle{icml2026}

\newpage
\appendix
\section{Proof of Theorem~\ref{thm:estrate}}
\EstRate*
\begin{proof}
    We first prove that the algorithm halts within $O\left( \frac{ \theta(d  + \ln(1 / \delta)) }{\pi^2_D \omega} \right)$ label requests with high probability. Then we show that when it halts, it satisfies the guarantees it required with high probability.

   For an $M > 1$ we setup later, let $i^*$ be the first iteration such that 
   $$2^{i^*} \geq  \frac{ M \theta(d \ln(14 \theta / \pi_\D) + \ln(4 / \delta)) }{ \pi_\D^2 \omega}.
   $$ Note that since $\D$ is continuous almost surely all there is no repetition among $R$.  Then, using Lemma~\ref{lem:mult-chern-bound}
   , as long as $M \geq 8$
   $$
   \Pr \left[|R| \leq \frac{2^{i^*} \pi_\D \omega}{2} \right] \leq e^{- \frac{2^{i^*} \pi_\D \omega}{8}} < \delta / 4\,.
   $$
   Thus, with probability $1 - \delta / 4$ we have 
   \begin{equation} \label{eq:estrate-r1}
       |R| >  \frac{ M \theta(d \ln(14 \theta / \pi_\D) + \ln(4 / \delta)) }{ 2 \pi_\D} \,.
   \end{equation}

   Next note that $V$ defined in Line~\ref{line:estrate-V} of the algorithm
   is the set of all hypotheses that are consistent with $R$. Moreover, $h^*$ defined in Line~\ref{line:estrate-h} is the hypothesis in $V$ which predicts the fewest positive labels over $S$, i.e., corresponds to $h^{PU}$ in Theorem~\ref{thm:real-passive-PU}, with the positive labeled sample $R$, and the unlabeled sample $S$.
   
   Consider $\gamma$ defined in Line~\ref{line:estrate-gamma} of the algorithm. Due to Remark~\ref{remark:cont-asmp}, the responses are i.i.d. samples from $\D(. \mid y = 1)$. Therefore, using Theorem~\ref{thm:real-passive-PU}, with probability $1 - \delta / 4$ we have
   $$
    \rho_\D(h^*, \ell) < \gamma.
   $$


   Since $|S| \geq |R|$ and due to definition of $\gamma$ we have
   $$
   M_2 \frac{d \ln(2|S| / d) + \ln(4 / \delta)}{|S|} < \gamma
   $$
   Thus, we can use Corollary~\ref{cor:B-hat-B} (ii). Which with probability $1 - \delta / 4$ yields
   \begin{equation}
       \mathcal B_{\D} \left(h^*, \gamma \right) \subseteq \hat{\mathcal{B}}_S \left(h^*, 3 \gamma \right) \subseteq \mathcal{B}_\D \left(h^*, 7 \gamma \right)\,.
   \end{equation}
    Thus, $\ell \in\mathcal{B}_\D(h^*, \gamma) \subseteq \hat{\mathcal{B}}_S(h^*, 3 \gamma)$.


    Due to \eqref{eq:estrate-r1} and definition of $\gamma$ for $M \geq 28 (M_1 + M_2)$ we have $\gamma \leq \frac{\pi_\D}{14 \theta}$. Therefore, using the definition of $\theta$ we derive
    $$
    \begin{aligned}
        \frac{2 \theta \Delta_\D\left(\hat{\mathcal{B}}_S(h^*, 3 \gamma)\right)}{  \pi_\D  } & \geq \frac{2 \theta \Delta_\D\left(\mathcal{B}_\D(h^*, 7 \gamma)\right)}{  \pi_\D }  \\
        & \geq 
        \frac{2 \theta \Delta_\D\left(\mathcal{B}_\D(h^*, \frac{\pi_\D}{2 \theta})\right)}{  \pi_\D }  
        \geq \theta\,.
    \end{aligned}
    $$
    Thus $\Delta_\D\left(\hat{\mathcal{B}}_S(h^*, 3\gamma)\right) \geq \pi_\D / 2$. Denoting 
    $$E = \{x \in \X \mid \forall h' \in \hat{\mathcal{B}}_S(h^*, 3\gamma): h'(x) = 1\},$$ 
    this indicates that
    $$
    \begin{aligned}
        \Pr(E) & = \pi_\D - \Pr \left (\DIS^P(\hat{\mathcal{B}}_S(h^*, 3\gamma))\right) \\
        & \geq \pi_\D - \Pr\left(\DIS(\hat{\mathcal{B}}_S(h^*, 3\gamma))\right) \geq \frac{\pi_\D}{2}.
    \end{aligned}
    $$

    Due to Remark~\ref{remark:cont-asmp} all $r$ members of $P$ with feedback $1$ are i.i.d.\ samples from $E$. For $M \geq 48$ we have $|S| = 2^{i^*} > \frac{48 \ln(4 / \delta)}{\omega \pi_\D}$. Thus, using Lemma~\ref{lem:mult-chern-bound}, since, we have
     $$
     \Pr[ r \leq 12 \ln(4 / \delta)] \leq \exp \left (- \frac{24 \ln(4 / \delta)}{8} \right) < \delta / 4\,.
     $$
    Taking a union bound over all events that we assumed to hold, we conclude that with probability at least $1 - 3\delta/4$, the algorithm halts by iteration $i^*$. Note that since $|P| \leq |S|$, at each iteration $i$ the learner makes at most $2^{i+1}$ queries. Therefore, the total number of queries is at most
    \[
    \sum_{i = 0}^{i^*} 2^{i+1} < 4 \cdot 2^{i^*}
    = O\!\left(\frac{\theta\bigl(d + \ln(1/\delta)\bigr)}{\pi_\D^2\,\omega}\right).
    \]

     Finally we need to prove that $\frac{r}{|P|} \in [\omega / 2, 2 \omega]$. Notice the algorithm gets out of the while  stops when $r \geq 8 \ln(8 / \delta)$. Moreover, thus, $|P|$ would be $NB(r, \omega) + r$ where $NB$ is the negative binomial random variable. Thus $|P|$ has average $\frac{r}{\omega}$. Thus, again using Lemma~\ref{lem:mult-chern-bound} we derive
     $$
     \Pr \left [ |P| \not \in \left [\frac{r}{2\omega}, \frac{2r}{\omega} \right]\right ] = 2 \exp \left( -\frac{r}{8 \omega} \right) \leq \delta / 4\,.
     $$
     This completes the proof.
\end{proof}



\end{document}

%% file: macros.tex
\usepackage[utf8]{inputenc} 
\usepackage[T1]{fontenc}    
\usepackage{hyperref}       
\usepackage{url}            
\usepackage{booktabs}       
\usepackage{amsfonts}       
\usepackage{nicefrac}       
\usepackage{microtype}      
\usepackage{xcolor}         

\usepackage{graphicx} 
\usepackage{amsthm}
\usepackage{amsmath,amsfonts,amssymb,mathtools, bbm}
\usepackage{xcolor}
\usepackage{multirow}

\newcommand{\cH}{\mathcal{H}}
\newcommand{\X}{\mathcal{X}}

\newcommand{\D}{\mathcal{D}}

\newcommand{\argmin}{\operatorname{argmin}}
\newcommand{\VCD}{\operatorname{VCD}}

\newcommand{\DIS}{\operatorname{DIS}}

\newcommand{\err}{\operatorname{err}}